\newtheorem{theorem}{Theorem}
\newtheorem{lemma}{Lemma}
\newtheorem{assumption}{Assumption}
\DeclareMathOperator*{\argmax}{arg\,max}
\newcommand{\algo}{{\sc\textsf{SPO-LF}}}
\newcommand{\ETSE}{{\sc\textsf{ETSE}}}
\title{Safe Policy Optimization with Local Generalized Linear Function Approximations}
\author{%
  Akifumi Wachi \\
  IBM Research \\
  \texttt{akifumi.wachi@ibm.com} \\
  \And
  Yunyue Wei \\
  Tsinghua University \\
  \texttt{weiyy20@mails.tsinghua.edu.cn} \\
  \AND
  Yanan Sui \\
  Tsinghua University \\
  \texttt{ysui@tsinghua.edu.cn}
}
\begin{document}

\maketitle

\begin{abstract}
\textit{Safe exploration} is a key to applying reinforcement learning (RL) in safety-critical systems. 
Existing safe exploration methods guaranteed safety under the assumption of regularity, and it has been difficult to apply them to large-scale real problems.
We propose a novel algorithm, \algo, that optimizes an agent's policy while learning the relation between a locally available feature obtained by sensors and environmental reward/safety using generalized linear function approximations.
We provide theoretical guarantees on its safety and optimality.
We experimentally show that our algorithm is 1)~more efficient in terms of sample complexity and computational cost and 2)~more applicable to large-scale problems than previous safe RL methods with theoretical guarantees, and 3)~comparably sample-efficient and safer compared with existing advanced deep RL methods with safety constraints.
\end{abstract}

\section{Introduction}

Applying reinforcement learning (RL) to applications with unknown safety constraints is challenging as RL is inherently an exploratory process and requires agents to learn the whole environment first.
Though there has been a surge of attempts to incorporate safety in RL \cite{alshiekh2018safe,berkenkamp2017safe,cheng2019end,fulton2018safe}, it is essentially difficult for most of the algorithms to guarantee safety, especially in the exploration phase.
If a single mistake could lead to catastrophic failures, conventional algorithms cannot be directly applied.

To guarantee safety while learning an environment, \textit{safe exploration} problems have been actively studied as a sub-field of safe RL \cite{garcia2015comprehensive}.
A mainstream safe-exploration method is to utilize a Gaussian process (GP, \cite{rasmussengaussian}) under problem settings in which the agent can observe only the safety function value of its current state.
Since the agent cannot get any information on the neighboring states, it is necessary to assume desirable properties for the function.
Specifically, \citet{sui2015safe} assumed that a safety function is Lipschitz continuous and has regularity that can be captured by appropriate kernel functions.
However, these assumptions do not hold in many environments.
For example, in the context of autonomous driving, the degree of safety varies steeply depending on the presence or absence of other vehicles and pedestrians.
Also, previous work on GP-based safe exploration suffers from inconsistency between their theoretical guarantees and computational cost.
Previous work has proved the completeness of the predicted safe region~\citep{turchetta2016safe} and the optimality of the acquired policy \citep{wachi_sui_snomdp_icml2020}, \textit{after a large number of samples}.
However, the computational cost of GP is known to be large.
Hence, it is hard to achieve theoretically guaranteed performance in a large environment.

\begin{figure*}[t]
\centering
	\subfigure[Existing safe exploration.]{%
		\includegraphics[clip, height=42mm]{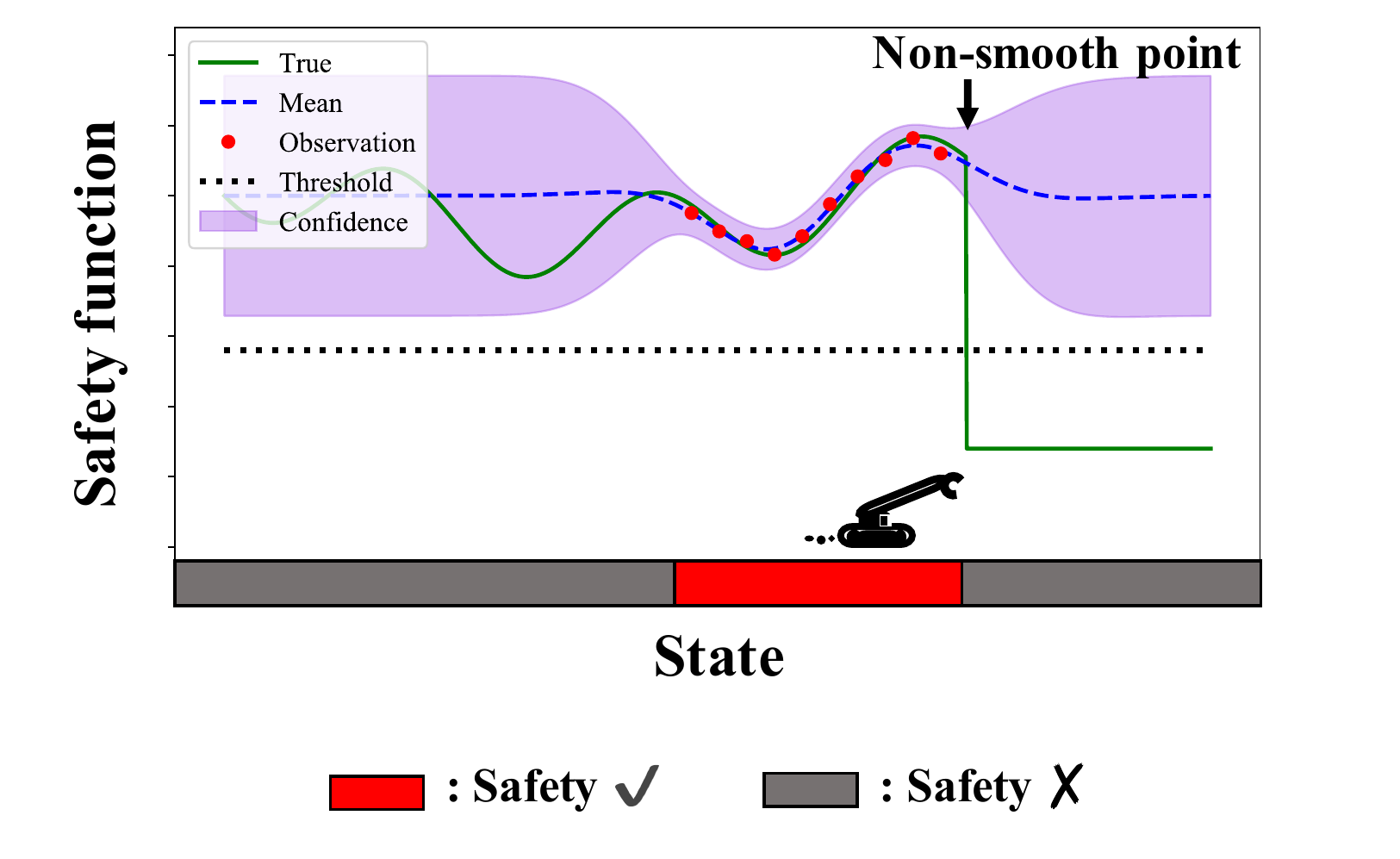}}%
	\hspace{2mm}
	\subfigure[Proposed safe exploration.]{%
		\includegraphics[clip, height=42mm]{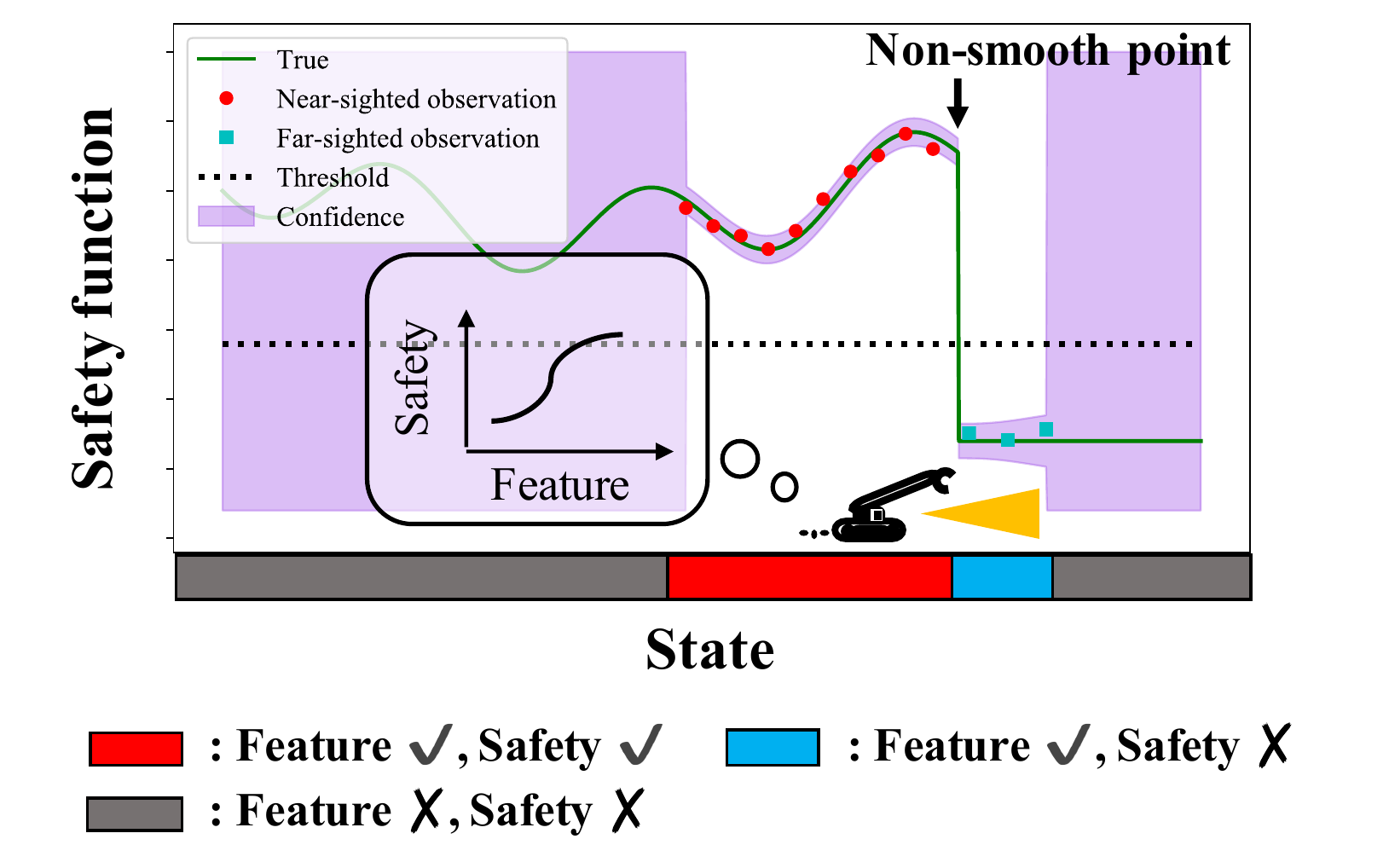}}%
	\caption{(a) Previous GP-based safe exploration under regularity assumptions, which fails to handle non-smooth changes. (b) Our proposed feature-based safe exploration. In this problem setting, an agent predicts safety function value while using feature vector obtained by far-sighted observations.}
	\label{fig:concept}
\end{figure*}

Based on the fact that robots are equipped with sensors, it is more reasonable to formulate the problem assuming that robots can obtain ``feature vectors'' for inferring the degree of safety of the (observable) states of their neighbors.
This problem formulation is realistic for real-world robots, and it also allows us to deal with safety that changes steeply, which previous research could not handle (see Figure~\ref{fig:concept}).
Under this new problem setting, we aim to propose a theoretically guaranteed and empirically efficient safe RL algorithm that is applicable to real, large-scale problems.

\subsection*{Related Work}
\textbf{RL with (generalized) linear function approximation. }
Function approximation, especially that based on deep neural networks (DNNs), has been one of the main reasons for the success of RL in recent years, and it has attracted more attention in both theory and practice \cite{bradtke1996linear,sutton1988learning}.
Given the difficulty of theoretical analysis of DNNs, linear MDPs \cite{li2010contextual,osband2016generalization} characterized by linear transition kernels and reward are frequently used due to their tractability.
The main interest of the theoretical analysis in existing studies has been the \textit{efficiency} of RL algorithms.
In fact, under the assumption of linear MDPs, recent studies (e.g., \citet{jin2020provably}, \citet{pmlr-v119-yang20h}) have proposed algorithms with a theoretical guarantee regarding sample efficiency.
\citet{wang2019optimism} then extended \citet{jin2020provably} to generalized linear model (GLM) settings and presented a practical and provably efficient algorithm under strictly weaker assumptions.

\textbf{Safe RL. }
A constrained MDP (CMDP, \cite{altman1999constrained}) is a popular model for formulating safe RL problems.
In the CMDP formulation, an agent needs to maximize the expected cumulative reward while ensuring that the expected cumulative cost is less than a threshold.
There has been a number of attempts to solve CMDPs \cite{satija2020constrained}, as represented by constrained policy optimization (CPO \cite{achiam2017constrained}), reward constrained policy optimization \cite{tessler2018reward}, Lagrangian-based actor-critic \cite{bhatnagar2012online,borkar2005actor,chow2017risk}, primal-dual policy optimization~\cite{paternain2019safe,pmlr-v119-yang20h}, and interior-point policy optimization~\cite{liu2020ipo}.
As a recent, closely related work, \citet{ding2021provably} proposed an algorithm with theoretical guarantees in regard to regret and constraint violation for CMDPs with linear structures of transition, reward, and safety.
In the previous papers mentioned above, however, a safety constraint is defined over a horizon; hence, they deal with less strict safety constraints than our paper that requires the agent to satisfy safety constraints \textit{at every time step}.
There has been research aimed at guaranteeing safety at every time step in both state-less and stateful settings.
For state-less settings, \textsc{SafeOpt} \cite{sui2015safe} and \textsc{Safe-LUCB} \cite{amani2019linear} were respectively proposed for Bayesian optimization problems and linear stochastic bandit problems.
For stateful settings, \citet{turchetta2016safe} proposed a GP-based \textsc{SafeMDP} algorithm that extends \textsc{SafeOpt} to stateful settings.
Under the assumptions of the deterministic transition and regularity of safety, while \citet{turchetta2016safe} aimed to fully explore the safe region, \citet{wachi2018safe} proposed a GP-based algorithm for maximizing the cumulative reward under safety constraint.

\subsection*{Our Contributions }
We model a new class of safe exploration problems where features are locally available within a limited sensor range.
This is a realistic formulation for some real-world robots with sensors, and it allows us to get rid of the regularity and Lipschitz assumptions that previous safe exploration algorithms relied on.
We develop safe policy optimization with a local feature (\algo), an RL algorithm that maximizes the cumulative reward under safety constraints while predicting reward and safety function values on the basis of local features via GLMs.
We analyze the theoretical properties of \algo\ and prove that an agent will achieve a near-optimal policy with high probability while satisfying safety constraints at every time step.
Finally, we evaluate our algorithm in two experiments.
In the first experiment using Gym-MiniGrid \cite{gym_minigrid}, we show that our algorithm is 1) more efficient in terms of sample complexity and computational cost and 2) more applicable to large-scale problems than previous, highly-theoretical GP-based methods.
In the second experiment using Safety-Gym \cite{Ray2019}, we show that our algorithm achieves comparable performance in terms of the reward to several notable safe RL baselines (e.g., CPO \cite{achiam2017constrained}) without executing even a single unsafe action.

\section{Problem Formulation}
\label{sec:formulation}

We consider a safety-constrained MDP, defined as a tuple
\[
\mathcal{M} = \langle \mathcal{S}, \mathcal{A}, f, H, r, g, \bm{\phi}, \bm{\psi} \rangle,
\]
where $\mathcal{S}$ is a finite set of states $\{\bm{s}\}$, $\mathcal{A}$ is a finite set of actions $\{\bm{a}\}$, $f: \mathcal{S} \times \mathcal{A} \rightarrow \mathcal{S}$ is the deterministic state transition, and $H \in \mathbb{N}$ is the finite horizon.
Also, $r: \mathcal{S} \rightarrow [0, 1]$ and $g: \mathcal{S} \rightarrow [0, 1]$ are bounded reward and safety functions. 
We assume that both $r$ and $g$ are \textit{not known} a priori.\footnote{For simplicity, we assume that the state transition function, $f$ is \textit{known a priori}. Note that it is not difficult to extend our setting to a priori unknown $f$ by referring to \citet{biyik2019efficient}.}
At every time step $t \in \mathbb{N}$, the agent must be in a \textit{safe} state.
Specifically, for a state $\bm{s}_t$, the safety function value $g(\bm{s}_t)$ must be above a threshold $h \in (0, 1]$; that is, the safety constraint is represented as $g(\bm{s}_t) \ge h$.

In real applications, a robot is equipped with sensors and can infer the values of reward or safety functions by using such measurements as ``hints.''
To formulate this mechanism, $\mathcal{M}$ contains two additional functions.
First, $\bm{\phi}: \mathcal{S} \rightarrow \mathbb{B}^d$ is an \textit{a priori unknown} function for mapping a state $\bm{s}$ to a $d$-dimensional feature vector, where $\mathbb{B}^d := \{\bm{x} \in \mathbb{R}^d \mid \|\bm{x}\| \le 1 \}$.
For convenience, let $\bm{\phi}_{\bm{s}} := \bm{\phi}(\bm{s})$ denote the feature vector for a state $\bm{s}$.
Second, we define a function $\bm{\psi}: \mathcal{S} \rightarrow \mathcal{S}$ for returning a set of states within the sensor range from a certain state.
Note that, while conventional methods for solving MDPs with linear function approximation assume the prior availability of the feature map for a whole state space, we consider the problem where features are locally obtained upon observation.

\textbf{Near-sighted and far-sighted observations. }
On the basis of the two functions $\bm{\phi}$ and $\bm{\psi}$, we model two types of observation: \textit{near-sighted observation} and \textit{far-sighted observation}.
Near-sighted observation is associated with direct observations of reward and safety, which is commonly used in typical RL settings including previous GP-based safe exploration studies.
Using near-sighted observations, the agent obtains a set of environmental information that contains noisy observations of the reward and safety function values along with the feature vector for the current state.
That is, the agent obtains $(\, \bm{s}_t, \bm{\phi}_{\bm{s}_t}, y_r, y_g \,)$,
where $y_r$ and $y_g$ are noisy observations of reward and safety, which are represented as $y_r = r(\bm{s}) + n_t^r$ and $y_g = g(\bm{s}) + n_t^g$,
with independent zero-mean noises $n_t^r$ and $n_t^g$.
Far-sighted observation is more related to feature vectors.
In this observation scheme, an agent obtains features for $\bm{\psi}(\bm{s})$.
For convenience, let $\Psi_t \subseteq \mathcal{S}$ denote the set of states that have been observed at least once until time $t$; that is,
$\Psi_t := \bm{\psi}(\bm{s}_0) \cup \bm{\psi}(\bm{s}_1) \cup \ldots \cup \bm{\psi}(\bm{s}_t)$.
Intuitively, the agent needs to predict the reward and safety function values by improving the internal GLMs on the basis of near-sighted observations and applying them to the features obtained by far-sighted observations.

\textbf{Goal. }
The quality of a policy $\pi: \mathcal{S} \rightarrow \mathcal{A}$ is evaluated in terms of the cumulative reward achieved by the agent under the safety constraint.
The problem is formally written as
\begin{alignat*}{2}
\text{maximize:} &\quad V_{\mathcal{M}}^{\pi}(\bm{s}_t) = \mathbb{E} \left[\, \sum_{\tau=0}^H \gamma^{\tau} r(\bm{s}_{t+\tau}) \,\middle\vert\,  \pi \, \right]
\quad
\text{subject to:} &\quad g(\bm{s}_{t}) \ge h.
\end{alignat*}
There are a number of previous studies on solutions to CMDPs, but most of them violate safety constraints during training (e.g., \cite{achiam2017constrained}).
In contrast, we require the agent to learn a policy in $\mathcal{M}$ without even a single constraint violation.
Without assumptions, it is intractable to solve the problem mentioned above.
In the following, we list assumptions in this paper.

\textbf{Generalized linear function approximation. }
The first assumption is on the structure of the reward and safety functions.
In this paper, we are concerned with generalized linear models (GLMs) regarding the reward and safety functions that are characterized by a feature vector.

\begin{assumption}
\label{assumption:glm}
The reward and safety functions are respectively represented using GLMs that are characterized by unknown coefficient vectors $\bm{\theta}_r^*, \bm{\theta}_g^* \in \mathbb{R}^d$ and fixed, strictly increasing link functions $\mu_r, \mu_g: \mathbb{R} \rightarrow [0, 1]$ such that
\[
r(\bm{s}) = \mu_r(\bm{\phi}_{\bm{s}}^\top \bm{\theta}_r^*)
\quad \text{and} \quad
g(\bm{s}) = \mu_g(\bm{\phi}_{\bm{s}}^\top \bm{\theta}_g^*).
\]
\end{assumption}
Note that linear and logistic models are the special cases of GLM with $\mu(x) = x$ and $\mu(x) = \frac{1}{1 + e^{-x}}$, respectively.
To predict the coefficients, we obtain maximum likelihood estimators (MLEs) for reward and safety functions as in \citet{NIPS2010_4166}.
Let $\tilde{\theta}_r$ and $\tilde{\theta}_g$ be the MLEs for reward and safety, respectively.
Here, $\tilde{\theta}_r$ and $\tilde{\theta}_g$ are the unique solutions of the following equations.
\begin{alignat*}{2}
\sum_{\tau=0}^{t} \left( r(\bm{s}_\tau) - \mu_r(\bm{\phi}_{\bm{s}_\tau}^\top \theta_r) \right)\bm{\phi}_{\bm{s}_\tau} = 0 \quad \text{and} \quad
\sum_{\tau=0}^{t} \left( g(\bm{s}_\tau) - \mu_g(\bm{\phi}_{\bm{s}_\tau}^\top \theta_g) \right)\bm{\phi}_{\bm{s}_\tau} = 0.
\end{alignat*}
The optimal solutions of the above equations can be found efficiently using such standard algorithms as Newton's method.
We additionally make an assumption on the properties of the link functions.
\begin{assumption}
Let $\diamond$ denote either $r$ or $g$.
The link function $\mu_\diamond$ is twice differentiable, and the first and second order derivatives are respectively bounded by $L_\diamond$ and $M_\diamond$.
Also, the link function $\mu_\diamond$ satisfies $\xi_\diamond = \inf_{\| \theta_\diamond - \theta^*_\diamond \| \le 1, \|\bm{\phi}_{\bm{s}}\| \le 1} \dot{\mu}_\diamond(\bm{\phi}_{\bm{s}}^\top \theta_\diamond) > 0$.
\end{assumption}
\textbf{Prior knowledge on safety. }
Without prior information on the states known to be safe, an agent cannot guarantee safety at the first step.
Hence, we make the following assumption.

\begin{assumption}
\label{assumption_prior_safety}
The agent is initially in a set of states, $S_0 \subseteq \mathcal{S}$, that is known to be safe a priori.
\end{assumption}

\textbf{Sub-Gaussian noise. }
The final assumption is on the properties of the observation noise, which has been commonly made in previous work (e.g., \citet{NIPS2011_e1d5be1c}).

\begin{assumption}
Let $\diamond$ be either $r$ or $g$.
The noise $n^\diamond_t$ is sub-Gaussian with fixed (positive) parameters $\sigma_\diamond$; that is, for all $t$, we have $\mathbb{E} \left [\, e^{\omega_\diamond n^\diamond_t} \mid \mathcal{F}^\diamond_{t-1}\,\right] \le e^{\omega_\diamond^2 \sigma_\diamond^2/2}$,
where $\{\mathcal{F}^\diamond_{n}\}$ is increasing sequences of sigma fields such that $n^\diamond_t$ is $\mathcal{F}^\diamond_{t}$-measurable with $\mathbb{E} \left[\, n^\diamond_t \mid \mathcal{F}^\diamond_{t-1}\, \right]=0$.
\end{assumption}

\section{Preliminary: Defining True Safe Space}

In this section, we ask and describe what the true safe space is.
Since the agent observes noisy measurements of reward and safety functions, it is impossible to know the exactly true reward and safety functions.
This is particularly critical when dealing with safety; thus, we aim to conservatively estimate the safety function value up to a certain confidence $\epsilon \in \mathbb{R}_{\ge 0}$.
As discussed in \citet{turchetta2016safe}, to identify the safety of a state, we incorporate the safety constraint, reachability, and returnability.
Under our settings, however, an agent is allowed to observe the feature vector for multiple neighboring states; hence, the existing definition of the true safe space cannot be directly used. 
Hereinafter, we explain how to represent the true safe space in $\mathcal{M}$.

First, we simply consider a set of states that satisfy the safety constraint with a margin of $\epsilon$.
Given the set of states $X$, the following region can be identified as safe based on the far-sighted observation.
\begin{alignat*}{2}
Y_\epsilon (X) = X \cup \{\bm{s} \in \mathcal{S} \mid \exists \bm{s}' \in X, \bm{s} \in \bm{\psi}(\bm{s}'): g(\bm{s}) - \epsilon \ge h \}.
\end{alignat*}
Second, we define reachability.
The agent may not be able to reach a state that is in $Y_\epsilon(\cdot)$ due to unsafe states in the middle of the pathway.
Given a set $X$, the states in $\Psi$ that are reachable from $X$ in one step are given by
$Y_{\text{reach}}(X) = X \cup \{\bm{s}' \in \Psi \mid \exists \bm{s} \in X, a \in \mathcal{A}: \bm{s}' = f(\bm{s}, a) \}$.
Hence, an $n$-step reachable operator is given by $Y^n_{\text{reach}}(X) = Y_{\text{reach}}(Y^{n-1}_{\text{reach}}(X))$,
where $Y^1_{\text{reach}}(X) = Y_{\text{reach}}(X)$.
Finally, the set containing all the states that are reachable from $X$ along an arbitrary long path in $\Psi$ is defined as
$\bar{Y}_{\text{reach}}(X) = \lim_{n \rightarrow \infty} Y^n_\text{reach}(X)$.
Finally, even after arriving at a state with reachability, the agent may \textit{not} have even a single option from among the viable actions for moving to another safe state.
Hence, before moving to a state $\bm{s}$, we should consider whether or not there is at least one viable path from~$\bm{s}$.
The set of states from which the agent can return to a set $\bar{X}$ through another set of states $X$ in one step is given by %
$Y_{\text{return}}(X, \bar{X}) = \bar{X} \cup \{\bm{s} \in X \mid \exists a \in\mathcal{A}: f(\bm{s}, a) \in \bar{X} \}$.
Thus, with $Y^1_\text{return}(X, \bar{X}) = Y_\text{return}(X, \bar{X})$, an $n$-step returnability operator is given by $Y^n_\text{return}(X, \bar{X}) = Y_\text{return}(X, Y^{n-1}_\text{return} (X, \bar{X}))$.
The set containing all the states from which the agent can return to $\bar{X}$ along an arbitrary long path in $X$ is defined as %
$\bar{Y}_{\text{return}}(X,\bar{X}) = \lim_{n \rightarrow \infty} Y^n_\text{return}(X, \bar{X}).
$

To guarantee safety, all the above requirements must be satisfied.
As such, we define $Z(\cdot)$:
\begin{equation*}
Z_\epsilon(X) = Y_\epsilon (X) \cap \bar{Y}_\text{reach}(X) \cap \bar{Y}_{\text{return}}(Y_\epsilon (X), X).
\end{equation*}
%
The safe set after repeating such expansions $n$-times is $Z^n_\epsilon(X) = Z_\epsilon(Z^{n-1}_\epsilon(X))$ with $Z^1_\epsilon(X) = Z_\epsilon(X)$.
Finally, the true safe space, $\bar{Z}_\epsilon(X)$ is obtained by taking the limit in terms of $n$; that is, $\bar{Z}_\epsilon(X) = \lim_{n \rightarrow \infty} Z^n_\epsilon(X)$.

\section{SPO-LF Algorithm}
\label{sec:method}
\begin{wrapfigure}{R}{0.50\textwidth}
\vspace{-0.32in}
\begin{minipage}[t]{0.50\textwidth}
\centering
\begin{algorithm}[H]
\caption{\algo\ with \ETSE}
\begin{small}
\label{algorithm1}
\begin{algorithmic}[1] 
\STATE $C_0(\bm{s}) \leftarrow [h, \infty)$ for all $\bm{s} \in S_0$
\LOOP
\STATE $\Psi_t \leftarrow \Psi_{t-1} \cup O(\bm{s}_t)$
\STATE $Q_t(\bm{s}) \leftarrow (\ref{eq:Q_s})$, $C_t(\bm{s}) \leftarrow Q_t(\bm{s}) \cap C_{t-1}(\bm{s})$
\STATE $l_t(\bm{s}) \leftarrow \min C_t(\bm{s})$, $u_t(\bm{s}) \leftarrow \max C_t(\bm{s})$
\STATE $S^-_t \leftarrow \{ \bm{s} \in \mathcal{S} \mid l_t(\bm{s}) \ge h \}$
\STATE $S^+_t \leftarrow \{ \bm{s} \in \mathcal{S} \mid u_t(\bm{s}) \ge h \}$
\STATE $\mathcal{X}^-_t \leftarrow$ (\ref{eq:pessimistic_X}), $\mathcal{X}^+_t \leftarrow$ (\ref{eq:optimistic_X})
\STATE $J^*(\bm{s}_t) \leftarrow$ (\ref{eq:J_R})
\IF{$\argmax_{\bm{s} \in Y_{\text{reach}}(\{\bm{s}_t\})}
J^*(\bm{s}) \subseteq \mathcal{X}_t^-$}
\STATE $\bm{s}_{t+1} \leftarrow \argmax_{\bm{s} \in (\mathcal{X}_t^- \cap Y_{\text{reach}}(\{\bm{s}_t\}))} J^*(\bm{s})$
\ELSE
\STATE $\bar{J}^*(\bm{s}_t) \leftarrow$ (\ref{eq:J_etse})
\WHILE{$\bm{s}_{t} \ne \argmax_{\bm{s} \in \mathcal{X}_t^-} \bar{J}^*(\bm{s})$}
\STATE $\bm{s}_{t+1} \leftarrow \argmax_{\bm{s} \in Y_{\text{reach}}(\{\bm{s}_t\})} \bar{J}^*(\bm{s})$
\ENDWHILE
\ENDIF
\ENDLOOP
\end{algorithmic}
\end{small}
\end{algorithm}
\end{minipage}
\vspace{-0.2in}
\end{wrapfigure}
Our \algo\ algorithm is outlined in Algorithm~\ref{algorithm1}.
On the basis of near-sighted and far-sighted observations, an agent first calculates the upper and lower bounds of reward and safety function values and predicts a safe region (lines~$3-8$).
The agent then optimizes its policy such that the cumulative reward is maximized while guaranteeing safety.
For this purpose, the agent chooses the next state within the pessimistically identified safe space though this choice is basically based on the optimism in the face of uncertainty (OFU) principle in terms of reward (lines $9-11$).
However, we observed a problem in that this pure approach often causes the agent to get stuck in a reward-poor state.
As a work-around for this propose, we additionally propose an algorithm, called event-triggered safe expansion (\ETSE) that encourages the agent to focus on the exploration of safety as necessary (lines $12-17$).
Note that the proofs of the lemmas are in Appendix~C.

\subsection{Confidence Intervals}
\label{subesec:confidence_interval}

Since the reward and safety function values need to be ``estimated'' on the basis of the observations, we aim to achieve a provably safe and optimal RL algorithm by calculating their upper and lower bounds.
Hence, we first derive theoretically guaranteed confidence intervals in terms of reward and safety.
Since the accuracy of the predictions depends on the availability of the feature vector, we obtain the confidence bounds according to whether a state is inside or outside $\Psi_t$.
Hereinafter, let the design matrix be
$W_t = \sum_{\tau=1}^t \bm{\phi}_{\bm{s}_{\tau}} \bm{\phi}_{\bm{s}_{\tau}}^\top$.
Also, the weighted $L_2$-norm of $\bm{\phi}$ associated with $W_t^{-1}$ is given by $\|\bm{\phi}\|_{W_t^{-1}} = (\bm{\phi}^\top W_t^{-1} \bm{\phi})^{1/2}$.
Finally, the maximum and minimum singular values of a matrix are denoted as $\lambda_{\max}(\cdot)$ and $\lambda_{\min}(\cdot)$.

\textbf{Confidence intervals inside $\Psi_t$. }
For all $\bm{s}$ in $\Psi_t$, the agent has an observed feature vector; hence, the reward and safety function values can be estimated more accurately by leveraging it.
On the basis of Theorem~1 in \citet{li2017provably}, we have the following lemma.

\begin{lemma}
\label{lemma:confidence_bound}
(Thm.~1 in \cite{li2017provably}).
Let $\delta_\diamond > 0$ be given and $\beta_\diamond = \frac{3 L_\diamond \sigma_\diamond}{\xi_\diamond} \sqrt{\log(3/\delta_\diamond)}$.
Assume $\lambda_{\min}(W_t) \ge  512 \sigma_\diamond^2 M_\diamond^2 \xi_\diamond^{-4} \left(d^2 + \log \delta_\diamond^{-1} \right)$.
Then, with a probability of at least $1-\delta_\diamond$, the MLE satisfies
\begin{alignat*}{2}
|\, \diamond(\bm{s}) - \mu_\diamond(\bm{\phi}_{\bm{s}}^\top \tilde{\theta}_\diamond) \,| \le \beta_\diamond \cdot \|\bm{\phi}_{\bm{s}}\|_{W_t^{-1}}.
\end{alignat*}
\end{lemma}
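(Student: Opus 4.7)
The plan is to treat this lemma as a direct corollary of Theorem~1 of \citet{li2017provably}, so the work is entirely in verifying that the setup of \algo\ matches the hypotheses of that theorem and then translating the conclusion back into the notation above. Let me lay out the bookkeeping that I would perform.

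First I would fix $\diamond \in \{r, g\}$ and collect the observations. After $t$ steps the agent has observed pairs $\{(\bm{\phi}_{\bm{s}_\tau}, y^\diamond_\tau)\}_{\tau=1}^t$, where $y^\diamond_\tau = \diamond(\bm{s}_\tau) + n^\diamond_\tau = \mu_\diamond(\bm{\phi}_{\bm{s}_\tau}^\top \bm{\theta}^*_\diamond) + n^\diamond_\tau$ by Assumption~\ref{assumption:glm}. The MLE $\tilde{\theta}_\diamond$ is exactly the unique zero of the score equation $\sum_\tau (y^\diamond_\tau - \mu_\diamond(\bm{\phi}_{\bm{s}_\tau}^\top \theta))\bm{\phi}_{\bm{s}_\tau} = 0$ displayed in the paper, which matches the estimator studied in \citet{li2017provably}. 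I would then verify the three hypotheses of their Theorem~1 one by one: (i) the link function $\mu_\diamond$ is twice differentiable with first and second derivatives bounded by $L_\diamond$ and $M_\diamond$, and satisfies $\inf \dot\mu_\diamond \ge \xi_\diamond > 0$ on the relevant domain — this is precisely Assumption~2; (ii) feature vectors lie in $\mathbb{B}^d$ so that $\|\bm{\phi}_{\bm{s}}\| \le 1$ — this is the codomain of $\bm{\phi}$; (iii) the noise $n^\diamond_\tau$ is conditionally sub-Gaussian with parameter $\sigma_\diamond$ with respect to an increasing filtration $\{\mathcal{F}^\diamond_\tau\}$ such that $\bm{\phi}_{\bm{s}_{\tau+1}}$ is $\mathcal{F}^\diamond_\tau$-measurable (so the design is predictable) — this follows from Assumption~4 together with the fact that $\bm{s}_{\tau+1}$ is chosen by \algo\ using only quantities available at time $\tau$.

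Next I would invoke Theorem~1 of \citet{li2017provably}: under the minimum-eigenvalue condition $\lambda_{\min}(W_t) \ge 512 \sigma_\diamond^2 M_\diamond^2 \xi_\diamond^{-4}(d^2 + \log \delta_\diamond^{-1})$, with probability at least $1-\delta_\diamond$ the bound
\[
|\mu_\diamond(\bm{\phi}_{\bm{s}}^\top \tilde{\theta}_\diamond) - \mu_\diamond(\bm{\phi}_{\bm{s}}^\top \bm{\theta}_\diamond^*)| \le \beta_\diamond \|\bm{\phi}_{\bm{s}}\|_{W_t^{-1}}
\]
holds for every $\bm{s}$ (the standard statement is uniform in the query point), with $\beta_\diamond = (3 L_\diamond \sigma_\diamond/\xi_\diamond)\sqrt{\log(3/\delta_\diamond)}$. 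Since Assumption~\ref{assumption:glm} gives $\diamond(\bm{s}) = \mu_\diamond(\bm{\phi}_{\bm{s}}^\top \bm{\theta}_\diamond^*)$, the left-hand side equals $|\diamond(\bm{s}) - \mu_\diamond(\bm{\phi}_{\bm{s}}^\top \tilde{\theta}_\diamond)|$, which is the desired conclusion.

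The only real obstacle, and the one I would spend the most care on, is the filtration/adaptivity step: in \algo\ the next state $\bm{s}_{\tau+1}$ is chosen as a function of past observations of \emph{both} reward and safety (lines 10--15 of Algorithm~\ref{algorithm1}), so the design vector $\bm{\phi}_{\bm{s}_{\tau+1}}$ is measurable with respect to the joint history. I would therefore define $\mathcal{F}^\diamond_\tau := \sigma(\bm{s}_1,\dots,\bm{s}_{\tau+1}, n^r_1, n^g_1, \dots, n^r_\tau, n^g_\tau)$ so that $n^\diamond_{\tau+1}$ is $\mathcal{F}^\diamond_{\tau+1}$-measurable with zero conditional mean and the sub-Gaussian bound of Assumption~4, and so that the design is predictable. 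Everything else (uniqueness of the MLE, convexity of the negative log-likelihood under $\dot\mu_\diamond > 0$) is standard and just needs to be noted in passing.
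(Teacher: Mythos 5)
Your proposal is correct and takes the same route as the paper: the paper's proof of this lemma is literally a one-line pointer to Theorem~1 of \citet{li2017provably}, so your more careful verification that the GLM setup, link-function assumptions, bounded features, and sub-Gaussian/filtration conditions of \algo\ match the hypotheses of that theorem is exactly the (implicit) content of the paper's argument, just spelled out. The extra attention you give to the adaptivity of the design under the joint reward/safety history is a reasonable and welcome elaboration, not a deviation.
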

%
%
The symbol $\diamond$ denotes either $r$ or $g$; that is, Lemma~\ref{lemma:confidence_bound} respectively holds for reward and safety.

\textbf{Confidence intervals outside $\Psi_t$.\ }
For $\bm{s}$ outside $\Psi_t$, the agent does not know even the feature vector.
To obtain the confidence intervals outside $\Psi_t$, we should incorporate \textit{any} possible feature vector; hence, confidence bounds are rather loose as presented below.
\begin{lemma}
\label{lemma:max_r_g}
Let $\diamond$ denote either $r$ or $g$.
Assume $\lambda_{\min}(W_t) \ge  512 \cdot \sigma_\diamond^2 M_\diamond^2 \xi_\diamond^{-4} \left(d^2 + \log \delta_\diamond^{-1} \right)$.
Then, with a probability of at least $1-\delta_\diamond$, for any $\bm{s} \in \mathcal{S} \setminus \Psi$, we have
\begin{alignat*}{2}
0
\le \diamond(\bm{s})
\le \mu_\diamond( \|\tilde{\theta}_\diamond\| ) + \beta_\diamond \cdot \lambda_{\max}(W_t^{-1}).
\end{alignat*}
\end{lemma}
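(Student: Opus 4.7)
The plan is to reduce Lemma~\ref{lemma:max_r_g} to Lemma~\ref{lemma:confidence_bound} by observing that the high-probability event in Lemma~\ref{lemma:confidence_bound} concerns only the estimator $\tilde{\theta}_\diamond$, which is built from data gathered inside $\Psi_t$; the state $\bm{s}$ at which the bound is evaluated enters only through the deterministic quantity $\bm{\phi}_{\bm{s}}$. Therefore, even though the agent does not actually know $\bm{\phi}_{\bm{s}}$ for $\bm{s}\in\mathcal{S}\setminus\Psi$, the inequality
\[
|\diamond(\bm{s}) - \mu_\diamond(\bm{\phi}_{\bm{s}}^\top \tilde{\theta}_\diamond)| \le \beta_\diamond \cdot \|\bm{\phi}_{\bm{s}}\|_{W_t^{-1}}
\]
still holds with probability at least $1-\delta_\diamond$ under the same minimum-eigenvalue assumption, because $\bm{\phi}_{\bm{s}}\in\mathbb{B}^d$ by construction of $\bm{\phi}$. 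This is the conceptual crux of the lemma.

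Next I would establish the lower bound. Since $\mu_\diamond:\mathbb{R}\to[0,1]$ by the problem formulation, we have $\diamond(\bm{s}) = \mu_\diamond(\bm{\phi}_{\bm{s}}^\top \bm{\theta}_\diamond^*) \ge 0$ trivially, so no probabilistic argument is required for this side.

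For the upper bound, I would split $\mu_\diamond(\bm{\phi}_{\bm{s}}^\top \tilde{\theta}_\diamond) + \beta_\diamond \|\bm{\phi}_{\bm{s}}\|_{W_t^{-1}}$ into its two summands and bound each by a quantity independent of the unknown $\bm{\phi}_{\bm{s}}$. For the first, Cauchy--Schwarz together with $\|\bm{\phi}_{\bm{s}}\|\le 1$ yields $\bm{\phi}_{\bm{s}}^\top \tilde{\theta}_\diamond \le \|\tilde{\theta}_\diamond\|$, and monotonicity of $\mu_\diamond$ (Assumption on link functions) then gives $\mu_\diamond(\bm{\phi}_{\bm{s}}^\top \tilde{\theta}_\diamond) \le \mu_\diamond(\|\tilde{\theta}_\diamond\|)$. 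For the second, the Rayleigh quotient gives $\bm{\phi}_{\bm{s}}^\top W_t^{-1} \bm{\phi}_{\bm{s}} \le \lambda_{\max}(W_t^{-1})\,\|\bm{\phi}_{\bm{s}}\|^2 \le \lambda_{\max}(W_t^{-1})$, so $\|\bm{\phi}_{\bm{s}}\|_{W_t^{-1}}$ is bounded above by a quantity depending only on the spectrum of $W_t$, matching the second term in the stated bound.

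I expect the only real subtlety to be justifying why the high-probability event from Lemma~\ref{lemma:confidence_bound} transfers uniformly to all $\bm{s}\in\mathcal{S}\setminus\Psi$ despite $\bm{\phi}_{\bm{s}}$ being unobserved; the point is that the concentration inequality is a statement about the random vector $\tilde{\theta}_\diamond-\bm{\theta}_\diamond^*$ in the $W_t$-geometry and is proved in \cite{li2017provably} uniformly over test directions of norm at most one, so no union bound over $\bm{s}$ is needed. Everything else is a one-line application of Cauchy--Schwarz and the monotonicity of the link function.
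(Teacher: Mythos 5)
Your proposal matches the paper's own proof essentially step for step: both apply Lemma~\ref{lemma:confidence_bound} at the unobserved state (valid, as you note, because the concentration event concerns only $\tilde{\theta}_\diamond$ and holds uniformly over unit-norm test directions), then use Cauchy--Schwarz with $\|\bm{\phi}_{\bm{s}}\|\le 1$ plus monotonicity of $\mu_\diamond$ for the first term and a spectral bound on $\|\bm{\phi}_{\bm{s}}\|_{W_t^{-1}}$ for the second, the lower bound being immediate from the range of $\mu_\diamond$. One imprecision you share with the paper: the Rayleigh-quotient step actually yields $\|\bm{\phi}_{\bm{s}}\|_{W_t^{-1}}\le\lambda_{\max}(W_t^{-1})^{1/2}$ rather than $\lambda_{\max}(W_t^{-1})$, a square root that both your write-up and the paper's proof elide.
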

The next lemma is on the relation between the upper bound defined above and the threshold, $h$.
\begin{lemma}
\label{lemma:max_g}
Assume $\lambda_{\min}(W_t) \ge  512 \cdot \sigma_g^2 M_g^2\xi_g^{-4} \left(d^2 + \log \delta_g^{-1} \right)$.
Then, with a probability of at least $1-\delta_g$, we have $\mu_g( \|\tilde{\theta}_g\| ) + \beta_g \cdot \lambda_{\max}(W_t^{-1}) \ge h$.
\end{lemma}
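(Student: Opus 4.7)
The plan is to exploit Assumption~\ref{assumption_prior_safety} (the initial safe set $S_0$) together with Lemma~\ref{lemma:confidence_bound}, pulled back through the monotonicity of the link function and a Cauchy--Schwarz estimate. Intuitively, because the agent has already observed at least one state in $S_0$, the fitted MLE $\tilde{\theta}_g$ must predict a value near or above $h$ on that state, and this forces $\|\tilde{\theta}_g\|$ (up to a confidence slack) to be large enough that $\mu_g(\|\tilde{\theta}_g\|)$ is at least $h$ modulo the $\beta_g\cdot\lambda_{\max}(W_t^{-1})$ buffer.

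Concretely, I would pick any $\bm{s}_0\in S_0$; by Assumption~\ref{assumption_prior_safety} this state is known to be safe, so $g(\bm{s}_0)\ge h$, and since the agent starts in $S_0$ we have $\bm{s}_0\in\Psi_t$, i.e.\ $\bm{\phi}_{\bm{s}_0}$ has been observed. The spectral condition on $W_t$ lets me invoke Lemma~\ref{lemma:confidence_bound}: with probability at least $1-\delta_g$,
\[
\mu_g(\bm{\phi}_{\bm{s}_0}^\top\tilde{\theta}_g)\ \ge\ g(\bm{s}_0)-\beta_g\|\bm{\phi}_{\bm{s}_0}\|_{W_t^{-1}}\ \ge\ h-\beta_g\|\bm{\phi}_{\bm{s}_0}\|_{W_t^{-1}}.
\]
Next, by Cauchy--Schwarz and $\|\bm{\phi}_{\bm{s}_0}\|\le 1$, we have $\bm{\phi}_{\bm{s}_0}^\top\tilde{\theta}_g\le\|\tilde{\theta}_g\|$, and since $\mu_g$ is strictly increasing (Assumption~2), $\mu_g(\|\tilde{\theta}_g\|)\ge\mu_g(\bm{\phi}_{\bm{s}_0}^\top\tilde{\theta}_g)$. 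Combining gives $\mu_g(\|\tilde{\theta}_g\|)\ge h-\beta_g\|\bm{\phi}_{\bm{s}_0}\|_{W_t^{-1}}$. The final step bounds the weighted norm by the relevant spectral quantity: $\|\bm{\phi}_{\bm{s}_0}\|_{W_t^{-1}}^2=\bm{\phi}_{\bm{s}_0}^\top W_t^{-1}\bm{\phi}_{\bm{s}_0}\le\lambda_{\max}(W_t^{-1})\|\bm{\phi}_{\bm{s}_0}\|^2\le\lambda_{\max}(W_t^{-1})$, which, using the same convention as in Lemma~\ref{lemma:max_r_g}, yields $\beta_g\|\bm{\phi}_{\bm{s}_0}\|_{W_t^{-1}}\le\beta_g\lambda_{\max}(W_t^{-1})$. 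Adding this slack to both sides gives the claim $\mu_g(\|\tilde{\theta}_g\|)+\beta_g\lambda_{\max}(W_t^{-1})\ge h$.

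The only real obstacle is the last step: going from the elliptic norm $\|\bm{\phi}_{\bm{s}_0}\|_{W_t^{-1}}$ to $\lambda_{\max}(W_t^{-1})$ is tight only up to a square root. I would therefore make sure to present it exactly as in the proof of Lemma~\ref{lemma:max_r_g} so that the two lemmas use the same convention (and then in particular no union bound is needed, since the single application of Lemma~\ref{lemma:confidence_bound} already provides the $1-\delta_g$ high-probability guarantee demanded by the statement). Everything else is bookkeeping: monotonicity of $\mu_g$, Cauchy--Schwarz, and the elementary operator-norm inequality.
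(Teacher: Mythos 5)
Your proof is correct, but it takes a different route from the paper's. The paper first establishes a preliminary lemma on the \emph{true} coefficient, namely $\mu_g(\|\theta_g^*\|) \ge h$ (proved from Assumption~\ref{assumption_prior_safety} via Cauchy--Schwarz and monotonicity applied to $\theta_g^*$), and then combines it with the asserted bound $\mu_g(\|\theta_g^*\|) \le \mu_g(\|\tilde{\theta}_g\|) + \beta_g\,\lambda_{\max}(W_t^{-1})$ inherited from the chain in Lemma~\ref{lemma:max_r_g}. You instead anchor the argument at a concrete observed safe state: you apply the confidence bound of Lemma~\ref{lemma:confidence_bound} directly to that state, and only then use Cauchy--Schwarz and monotonicity on the \emph{estimated} coefficient $\tilde{\theta}_g$. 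Your decomposition is arguably the cleaner one: the paper's intermediate inequality $\mu_g(\|\theta_g^*\|) \le \mu_g(\|\tilde{\theta}_g\|) + \beta_g\,\lambda_{\max}(W_t^{-1})$ does not literally follow from Lemma~\ref{lemma:max_r_g} (whose chain bounds $\mu_g(\langle\bm{\phi}^*_{g,\bm{s}},\theta_g^*\rangle)$, which is in general \emph{smaller} than $\mu_g(\|\theta_g^*\|)$ unless some state realizes the feature $\theta_g^*/\|\theta_g^*\|$), whereas every step of yours is justified. Two small points: you should take $\bm{s}_0$ to be the agent's actual initial state rather than an arbitrary element of $S_0$, since only the former is guaranteed to lie in $\Psi_t$ and to have contributed a near-sighted observation to the MLE; and the step $\|\bm{\phi}_{\bm{s}_0}\|_{W_t^{-1}} \le \lambda_{\max}(W_t^{-1})$ is, as you note, only valid up to a square root --- but since the paper's own proof of Lemma~\ref{lemma:max_r_g} adopts exactly this convention, matching it is the right call here.
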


\paragraph{Efficiency metric of exploration.}

Because $\beta_r$ and $\beta_g$ are constants independent from $\bm{s}$, the efficiency of the exploration of the reward and safety functions depends solely on $\|\bm{\phi}\|_{W_t^{-1}}$ within $\Psi_t$ and $\lambda_{\max}(W_t^{-1})$ outside $\Psi_t$.
Hence, useful actions for exploring the reward also contribute to the exploration of safety.
This is an advantage in leveraging feature vectors in safe RL problems, and it leads to more efficient policy optimization compared with previous work such as \citet{wachi_sui_snomdp_icml2020} that requires the agent to explore the two functions separately.

\paragraph{Assumption on $\bm{\lambda_{\min}(W_t)}$.}
As discussed in \citet{li2017provably}, the assumption on $\lambda_{\min}(W_t)$ is satisfied under mild conditions.
However, since we consider a safe exploration problem, it is extremely important to know how to ensure that the assumption on $\lambda_{\min}(W_t)$ is valid.
This paper presents two methods; the first is to provide (typically a small number of) data for initializing GLMs, and the second is to prepare a sufficiently large $S_0$ and prohibit the agent from going outside of $S_0$, until the condition on $\lambda_{\min}(W_t)$ holds.
In our experiment, we simply took the first approach.

\subsection{Prediction of Safe Space}

We now define two types of predicted safe space where safety is identified either optimistically or pessimistically, as discussed in \citet{turchetta19goose}.
First, we define the lower and upper bounds of safety.
By Lemma~\ref{lemma:confidence_bound} and~\ref{lemma:max_r_g}, the confidence bounds on safety are represented as
\begin{align}
Q_t(\bm{s}) =
\begin{cases}
\ [\, \mu_g(\bm{\phi}_{\bm{s}}^\top \tilde{\theta}_g) \pm \beta_g \cdot \|\bm{\phi}_{\bm{s}}\|_{W_t^{-1}} \, ] \quad \quad\quad \ \text{if} \quad \bm{s} \in \Psi_t, \\
\ [\, 0,  \mu_g( \|\tilde{\theta}_g\| ) + \beta_g \cdot \lambda_{\max}(W_t^{-1}) \, ] \quad \ \ \text{otherwise}.
\end{cases}
\label{eq:Q_s}
\end{align}

We consider the intersection of $Q_t$ up to iteration $t$, which is defined as $C_t(\bm{s}) = Q_t(\bm{s}) \cap C_{t-1}(\bm{s})$,
where $C_0(\bm{s}) = [h, \infty]$ for all $\bm{s} \in S_0$.
The lower and upper bounds on $C_t(\bm{s})$ are denoted by $l_t(\bm{s}):=\min C_t(\bm{s})$ and $u_t(\bm{s}):=\max C_t(\bm{s})$, respectively.

\textbf{Predicted pessimistic safe space. }
We now define the predicted \textit{pessimistic} safe space.
We first consider a set of states such that the safety constraint is satisfied with high probability. 
It is formulated using $l_t$; that is, 
\[S^-_t := \{ \, \bm{s} \in \mathcal{S} \mid l_t(\bm{s}) \ge h \, \}.
\]
The desired \textit{pessimistic} safe space, $\mathcal{X}^-_t$ is a subset of $S^-_t$ and also satisfies the reachability and returnability constraints; that is,
\begin{equation}
\mathcal{X}^-_t = S^-_t \cap \bar{Y}_\text{reach}(\mathcal{X}^-_{t-1}) \cap \bar{Y}_{\text{return}}(S_t^-, \mathcal{X}^-_{t-1}).
\label{eq:pessimistic_X}
\end{equation}

\textbf{Predicted optimistic safe space. }
To optimize a policy, we need to incorporate the reward not only in $\mathcal{X}_t^-$ but also in a region that contains all the states that may turn out to be safe even with a small probability.
We consider the set of states that may satisfy the safety constraint even with a small probability, which is represented as
\[
S^+_t := \{ \, \bm{s} \in \mathcal{S} \mid u_t(\bm{s}) \ge h \, \}.
\]
Note that, by Lemmas~\ref{lemma:max_r_g} and \ref{lemma:max_g}, for all states in $\mathcal{S} \setminus \Psi_t$, the upper and lower confidence bounds in terms of safety satisfy $l_t(\bm{s}) < h \le u_t(\bm{s})$.
Hence, all states in $\mathcal{S} \setminus \Psi_t$ are identified as pessimistically unsafe but optimistically safe.
Similarly to the pessimistic safe region, we again incorporate reachability and returnability.
The resulting optimistic safe region is given by
\begin{equation}
\mathcal{X}^+_t = S^+_t \cap \bar{Y}_\text{reach}(\mathcal{X}^+_{t-1}) \cap \bar{Y}_{\text{return}}(S_t^+, \mathcal{X}^+_{t-1}).
\label{eq:optimistic_X}
\end{equation}

\subsection{Safe Policy Optimization based on OFU Principle}
\label{subsec:spo}

As discussed in Section~\ref{subesec:confidence_interval}, we can efficiently explore reward and safety simultaneously by focusing on $\|\bm{\phi}\|_{W_t^{-1}}$ and $\lambda_{\max}(W_t^{-1})$.
Hence, it turns out that we only need to balance exploration and exploitation in terms of reward.
For balancing 1)~the exploration of reward and safety and 2)~the exploitation of reward, we solve a Bellman equation on the basis of the OFU principle:
\begin{alignat}{2}
\!
J^*(\bm{s}_t) \! = \! \max_{\bm{s} \in \mathcal{X}_t^+} \bigl[ R(\bm{s}) + \gamma J^*(\bm{s}) \bigr]
\ \ 
\text{where}
\ \ 
R(\bm{s}) 
\!
:=
\! 
\begin{cases}
\mu_r (\bm{\phi}_{\bm{s}}^\top \tilde{\theta}_r) +  \beta_r \|\bm{\phi}_{\bm{s}}\|_{W_t^{-1}} \quad\  \text{if} \ \ \bm{s} \in \Psi_t, & \\
\mu_r ( \|\tilde{\theta}_r\| ) + \beta_r \lambda_{\max}(W_t^{-1}) \ \  \text{otherwise}.
\end{cases}
\label{eq:J_R}
\end{alignat}
Note that $R(\bm{s})$ represents the upper confidence bound with regard to reward.
The next state to visit should be pessimistically safe and reachable from $\bm{s}_t$ with one step; that is,
\[
\bm{s}^*_{t+1} = \argmax_{\bm{s} \in (\mathcal{X}_t^- \cap Y_{\text{reach}}(\{\bm{s}_t\}))} J^*(\bm{s}).
\]

\textbf{Problem with this approach. }
We observed that the above algorithm often causes the agent to get stuck in a state with a (relatively) small reward.
The reason for this problem is as follows.
When the state with the maximum $J^*$ is outside $\mathcal{X}_t^-$, the agent must execute the ``second-best'' action.
The second-best solution is often the one where the agent tries to take the \textit{stay} action and then visit $\bm{s}^*_{t+1}$; then, the agent continues to stay at the current state.

\textbf{Event-triggered safe expansion. }
To address the above problem, we propose an algorithm called event-triggered safe expansion (\ETSE).
We first define 
\[
\eta := \argmax_{\bm{s} \in Y_{\text{reach}}(\{\bm{s}_t\})} J^*(\bm{s}),
\]
meaning the state that the agent will visit \textit{if there is no safety constraint}.
The \ETSE~algorithm is \textit{triggered} only when $\bm{s}_{t+1}^* \ne \eta$.
Intuitively, if $\eta$ is within $\mathcal{X}_t^-$, the agent can simply execute the next optimal action to visit $\bm{s}^*_{t+1}$; hence, the above problem arises only if $\bm{s}_{t+1}^* \ne \eta$ is satisfied.
This algorithm focuses on expanding the pessimistic safe region by visiting the states with a large uncertainty.
\begin{equation}
\bar{J}^*(\bm{s}_t) = \max_{s_{t+1} \in \mathcal{X}_t^-} \bigl[\, \|\bm{\phi}_{\bm{s}_{t+1}}\|_{W_t^{-1}} + \gamma \bar{J}^*(\bm{s}_{t+1}) \,\bigr].
\label{eq:J_etse}
\end{equation}
On the basis of (\ref{eq:J_etse}), the agent plans the trajectory so as to gain a lot of information, and the \ETSE~continues until the agent arrives at the state with the highest uncertainty.
Once the \ETSE~finishes being applied, the agent aims to handle the exploration and exploitation dilemma by solving (\ref{eq:J_R}).

\section{Theoretical Analysis}

In this section, we present the theoretical results obtained for our \algo.
As a preliminary for the results, let us define the time step $t^*$, which can be used as an indicator that the agent has sufficiently understood the models of reward and safety.
We now present the following lemma regarding $t^*$.

\begin{lemma}
\label{lemma_confidence_epsilon}
Let $C_1$ and $C_2$ be positive, universal constants.
Also, let $t^*$ denote the smallest integer satisfying
\[
\lambda_{\min}(\Sigma) t - C_1 \sqrt{td} - C_2 \sqrt{t \log(\delta_g^{-1})} \ge \max \left\{\frac{\beta_g}{\epsilon}, 1 \right\}
\]
with $\Sigma = \mathbb{E}[\, \bm{\phi}_t \bm{\phi}_t^\top \,]$. 
Then, for all $t > t^*$, the following inequalities hold.
\[
|\, \mu_g(\bm{\phi}^\top \theta_g^*) - \mu_g(\bm{\phi}^\top \tilde{\theta}_g) \,| \le \epsilon
\quad \text{and} \quad
\sum_{\tau=t+1}^{t+H} \|\bm{\phi}_{\bm{s}_\tau}\|_{W_\tau^{-1}} \le \sqrt{2Hd\log \left( \frac{t+H}{d} \right)}.
\]
\end{lemma}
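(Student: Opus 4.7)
The two conclusions are essentially independent and I would prove them in sequence, with the matrix--concentration step doing the heavy lifting for both.

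For the first inequality my plan is to translate the population quantity $\lambda_{\min}(\Sigma)\,t$ into a high-probability sample lower bound on $\lambda_{\min}(W_t)$, and then plug this into Lemma~\ref{lemma:confidence_bound}. Because the features $\bm{\phi}_{\bm{s}_\tau}$ satisfy $\|\bm{\phi}_{\bm{s}_\tau}\|\le 1$, the outer products $\bm{\phi}_{\bm{s}_\tau}\bm{\phi}_{\bm{s}_\tau}^\top$ are bounded rank-one matrices, so a martingale matrix Bernstein / Freedman inequality (since the features are generated adaptively by the policy rather than i.i.d.) yields, with probability at least $1-\delta_g$,
\[
\lambda_{\min}(W_t)\;\ge\;\lambda_{\min}(\Sigma)\,t - C_1\sqrt{td}-C_2\sqrt{t\log \delta_g^{-1}},
\]
for universal constants $C_1,C_2$. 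By the definition of $t^*$, for every $t>t^*$ the right-hand side is at least $\beta_g/\epsilon$ (and also at least $1$). Combining this with Lemma~\ref{lemma:confidence_bound} and the elementary bound $\|\bm{\phi}\|_{W_t^{-1}}\le \|\bm{\phi}\|/\sqrt{\lambda_{\min}(W_t)}\le 1/\sqrt{\lambda_{\min}(W_t)}$ gives
\[
|\,\mu_g(\bm{\phi}^\top\theta_g^*)-\mu_g(\bm{\phi}^\top\tilde\theta_g)\,|\;\le\;\beta_g\|\bm{\phi}\|_{W_t^{-1}}\;\le\;\frac{\beta_g}{\sqrt{\lambda_{\min}(W_t)}}\;\le\;\epsilon,
\]
as required (interpreting the condition in the statement as controlling $\beta_g/\epsilon$ at the appropriate power, which is the only routine algebraic point to verify).

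For the second inequality my plan is to apply the elliptical potential lemma (Lemma~11 of Abbasi-Yadkori, P\'al, and Szepesv\'ari, 2011). The first observation is that, for $\tau\in[t+1,t+H]$ with $t>t^*$, the lower bound from the previous step gives $\lambda_{\min}(W_\tau)\ge 1$, hence $\|\bm{\phi}_{\bm{s}_\tau}\|_{W_\tau^{-1}}^2\le 1$; this makes the usual $\min\{1,\cdot\}$ truncation inside the elliptical potential lemma inactive. Therefore
\[
\sum_{\tau=t+1}^{t+H}\|\bm{\phi}_{\bm{s}_\tau}\|_{W_\tau^{-1}}^{2}\;\le\;2\bigl(\log\det W_{t+H}-\log\det W_{t}\bigr)\;\le\;2d\log\frac{t+H}{d},
\]
where the last step uses the AM--GM bound $\log\det W_{t+H}\le d\log(\operatorname{tr}(W_{t+H})/d)\le d\log((t+H)/d)$, valid because $\operatorname{tr}(W_{t+H})\le t+H$ under $\|\bm{\phi}\|\le 1$. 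A Cauchy--Schwarz step over the $H$ summands then yields exactly
\[
\sum_{\tau=t+1}^{t+H}\|\bm{\phi}_{\bm{s}_\tau}\|_{W_\tau^{-1}}\;\le\;\sqrt{H}\cdot\sqrt{2d\log\frac{t+H}{d}}.
\]

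The main obstacle I anticipate is the matrix-concentration step underlying the bound on $\lambda_{\min}(W_t)$: I have to track that the features are produced by an adaptive policy rather than being i.i.d., and then pick a version of matrix Bernstein / Freedman whose deviation term is exactly of the form $C_1\sqrt{td}+C_2\sqrt{t\log \delta_g^{-1}}$ claimed in the hypothesis, rather than a logarithmically worse variant. The other two ingredients (the elliptical potential lemma and the determinant-trace AM--GM bound) are completely standard once that lower bound on $\lambda_{\min}(W_\tau)$ is in hand for all $\tau>t^*$.
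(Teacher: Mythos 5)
Your proof follows essentially the same route as the paper's: Appendix~C.4 simply cites Lemma~1 of \citet{li2017provably} for the high-probability lower bound $\lambda_{\min}(W_t) \ge \lambda_{\min}(\Sigma)\,t - C_1\sqrt{td} - C_2\sqrt{t\log\delta_g^{-1}}$ (together with $\lambda_{\max}(W_t^{-1})=1/\lambda_{\min}(W_t)$) and Lemma~2 of the same reference for the elliptical-potential sum bound, which are exactly the two ingredients you reconstruct. The power mismatch you flag ($\lambda_{\min}(W_t)\ge\beta_g/\epsilon$ versus the $(\beta_g/\epsilon)^2$ your final step actually needs) is also present in the paper's own argument, which silently uses $\|\bm{\phi}\|_{W_t^{-1}}\le\lambda_{\max}(W_t^{-1})$ where only $\sqrt{\lambda_{\max}(W_t^{-1})}$ is justified, so you have not introduced a gap beyond what the paper already carries.
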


Finally, we present two main theorems. 
The first theorem is on the satisfaction of the safety constraint, and the second is on the near-optimality of the acquired policy.

\begin{theorem}
\label{theo:safety}
Assume that the noise of safety is $\sigma_g$-sub-Gaussian and $\lambda_{\min}(W_t) \ge  512 \cdot \sigma_g^2 M_g^2\xi_g^{-4} \left(d^2 + \log \delta_g^{-1} \right)$ holds.
Suppose that the next state is chosen from $\mathcal{X}^-_t$.
Then, the following statement holds with a probability of at least $1 - \delta_g$: 
\[
\forall t \ge 0, \quad g(\bm{s}_t) \ge h.
\]
\end{theorem}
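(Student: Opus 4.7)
The plan is to prove $g(\bm{s}_t) \ge h$ by induction on $t$, exploiting the fact that the pessimistic safe set $\mathcal{X}_t^-$ is constructed so that every state it contains satisfies $l_t(\bm{s}) \ge h$, and then invoking Lemma~\ref{lemma:confidence_bound} to transfer this pessimistic bound to the true safety value. The returnability and reachability clauses in $\mathcal{X}_t^-$ are not needed for the safety guarantee itself; only the containment $\mathcal{X}_t^- \subseteq S_t^-$ is used here.

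First, I would condition on the good event for the safety confidence bounds. Under the assumption $\lambda_{\min}(W_t) \ge 512\, \sigma_g^2 M_g^2 \xi_g^{-4}(d^2 + \log \delta_g^{-1})$, Lemma~\ref{lemma:confidence_bound} (applied with $\diamond = g$) guarantees that with probability at least $1 - \delta_g$ the inequality $|g(\bm{s}) - \mu_g(\bm{\phi}_{\bm{s}}^\top \tilde{\theta}_g)| \le \beta_g \|\bm{\phi}_{\bm{s}}\|_{W_t^{-1}}$ holds for every $\bm{s} \in \Psi_t$. I would apply this uniformly over $t$ (via the anytime form of the Li et al.\ (2017) bound), and perform the rest of the argument on this event.

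Next I would handle the induction. For the base case, Assumption~\ref{assumption_prior_safety} gives $g(\bm{s}_0) \ge h$ directly, consistent with the initialization $C_0(\bm{s}) = [h, \infty)$ for $\bm{s} \in S_0$. For the inductive step, suppose $g(\bm{s}_\tau) \ge h$ for all $\tau \le t$ and that the algorithm selects $\bm{s}_{t+1} \in \mathcal{X}_t^-$. By the definition $\mathcal{X}_t^- \subseteq S_t^- = \{\bm{s} : l_t(\bm{s}) \ge h\}$, we have $l_t(\bm{s}_{t+1}) \ge h$. There are two cases. If $\bm{s}_{t+1} \in \Psi_t$, Lemma~\ref{lemma:confidence_bound} together with the intersection $C_t(\bm{s}) = Q_t(\bm{s}) \cap C_{t-1}(\bm{s})$ implies $g(\bm{s}_{t+1}) \ge l_t(\bm{s}_{t+1}) \ge h$. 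If $\bm{s}_{t+1} \notin \Psi_t$, then by Lemma~\ref{lemma:max_r_g} the lower endpoint of $Q_t(\bm{s}_{t+1})$ equals $0 < h$, so the only way for $l_t(\bm{s}_{t+1}) \ge h$ to arise is if the $[h, \infty)$ component comes from the initialization $C_0$, which forces $\bm{s}_{t+1} \in S_0$; Assumption~\ref{assumption_prior_safety} then supplies $g(\bm{s}_{t+1}) \ge h$ directly.

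The main obstacle I anticipate is ensuring that the confidence-bound statement of Lemma~\ref{lemma:confidence_bound} can be applied uniformly over all $t$ rather than at a single pre-specified time, since the algorithm reveals states sequentially and the induction relies on the good event holding at every step. Resolving this will likely require invoking the anytime martingale form of the Li et al.\ (2017) bound (or, less sharply, a union bound with a rescaled $\delta_g$). Once uniformity is in hand, the remainder of the proof is essentially bookkeeping: verifying that the intersection defining $C_t$ preserves the $\ge h$ lower bound, and that the two cases ($\bm{s}_{t+1} \in \Psi_t$ versus $\bm{s}_{t+1} \in S_0 \setminus \Psi_t$) exhaust all states that can belong to $\mathcal{X}_t^-$.
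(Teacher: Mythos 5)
Your proposal is correct and follows essentially the same route as the paper's proof: on the $1-\delta_g$ confidence event, every state the algorithm visits satisfies $g(\bm{s}) \ge l_t(\bm{s}) \ge h$ because $\mathcal{X}_t^- \subseteq S_t^-$. Your write-up is in fact more careful than the paper's two-sentence argument --- the explicit case split between $\bm{s} \in \Psi_t$ and $\bm{s} \in S_0 \setminus \Psi_t$, and the observation that the confidence bound must hold uniformly over $t$ (via an anytime bound or a union bound), both address points the paper leaves implicit.
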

The proof is in Appendix E.
The agent chooses the next state to visit within $\mathcal{X}^-_t$ ($\subseteq S_t^-$), and $S_t^-$ is characterized with $l(\bm{s})$.
Since $g(\bm{s})$ is greater than $l(\bm{s})$ with a probability of at least $1 - \delta_g$, our algorithm guarantees that the agent will execute only safe actions.\footnote{Unlike \citet{turchetta2016safe} or \citet{wachi_sui_snomdp_icml2020}, this paper does not provide a theory on the completeness (i.e., convergence to the true safe region) of the predicted safe region. This is because our algorithm explores safety as a result of balancing exploration and exploitation for reward, which contributes to its sample-efficiency.}

\begin{theorem}
\label{theo:near_optimality}
Set $T^* = t^* + | \bar{Z}_0(S_0) |$ and
\[
\epsilon_V = V_{\max} \cdot \delta_g + \beta_r(\delta_r) \sqrt{2Hd\log \left( \frac{t+H}{d} \right)},
\]
where $V_{\max}$ is the upper bound of the value function.
Assume that the noise of reward and safety is sub-Gaussian and $H > T^*$.
Then, with high probability,
\[
V^{\pi_t}(\bm{s}_t) \ge V^*(\bm{s}_t) - \varepsilon_V,
\]
--- i.e., the algorithm is $\varepsilon_V$-close to the optimal policy --- for all but $T^*$ time steps while guaranteeing safety with a probability of at least $1-\delta_g$.
\end{theorem}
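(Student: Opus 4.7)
The plan is to decompose the analysis into two phases: an initial exploration phase lasting at most $T^*$ time steps, and a subsequent near-optimal phase for all remaining $t$. The $t^*$ component of $T^*$ comes directly from Lemma~\ref{lemma_confidence_epsilon}, after which (i) the safety MLE has pointwise error at most $\epsilon$ inside $\Psi_t$ and (ii) the cumulative reward-confidence width over any $H$-step window satisfies $\sum_{\tau=t+1}^{t+H}\|\bm{\phi}_{\bm{s}_\tau}\|_{W_\tau^{-1}} \le \sqrt{2Hd\log((t+H)/d)}$. The $|\bar{Z}_0(S_0)|$ component bounds the number of additional time steps during which \ETSE{} may be triggered before the pessimistic safe region $\mathcal{X}_t^-$ has fully expanded to cover $\bar{Z}_0(S_0)$; the key combinatorial fact to establish is that each \ETSE{} phase terminates in a state of maximal posterior uncertainty, after which at least one previously-uncertain state moves permanently into $\mathcal{X}_t^-$, so the total number of such phases cannot exceed the cardinality of the true safe region.

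For any $t > T^*$, the plan is to show near-optimality via the OFU principle applied to $J^*$. First, condition on the good event $\mathcal{E}_g$ of Theorem~\ref{theo:safety}, which has probability at least $1-\delta_g$. On $\mathcal{E}_g$ the optimal policy's trajectory lies in the true safe region and hence in $\mathcal{X}_t^+$, which combined with the fact that $R(\bm{s})$ in (\ref{eq:J_R}) is an upper confidence bound on $r(\bm{s})$ yields the standard optimism inequality $J^*(\bm{s}_t) \ge V^*(\bm{s}_t)$. On the complementary event of probability at most $\delta_g$ I would bound the suboptimality trivially by $V_{\max}$; this is precisely the $V_{\max}\cdot\delta_g$ term in $\varepsilon_V$.

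Second, I would control the gap $J^*(\bm{s}_t) - V^{\pi_t}(\bm{s}_t)$ by unrolling the Bellman recursion. Since $R(\bm{s}) - r(\bm{s}) \le \beta_r\|\bm{\phi}_{\bm{s}}\|_{W_t^{-1}}$ inside $\Psi_t$ by Lemma~\ref{lemma:confidence_bound}, and after $T^*$ steps the agent's trajectory stays within $\mathcal{X}_t^- \subseteq \Psi_t$, telescoping over the $H$-step horizon gives
\[
J^*(\bm{s}_t) - V^{\pi_t}(\bm{s}_t) \;\le\; \beta_r \sum_{\tau=t+1}^{t+H}\|\bm{\phi}_{\bm{s}_\tau}\|_{W_\tau^{-1}} \;\le\; \beta_r\sqrt{2Hd\log\!\left(\tfrac{t+H}{d}\right)},
\]
where the second inequality is the reward-side specialization of Lemma~\ref{lemma_confidence_epsilon}. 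Chaining the two bounds via $V^{\pi_t}(\bm{s}_t) \ge J^*(\bm{s}_t) - \beta_r\sqrt{2Hd\log((t+H)/d)} \ge V^*(\bm{s}_t) - \varepsilon_V$ (with the $V_{\max}\cdot\delta_g$ contribution absorbed for the failure event) delivers the claimed bound.

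The main obstacle I anticipate is rigorously bounding the \ETSE{} phase count by $|\bar{Z}_0(S_0)|$. This requires two sub-arguments: (a) that once $t > t^*$ the safety lower bound $l_t$ is tight enough (within $\epsilon$) that the pessimistic safe set $\mathcal{X}_t^-$ can only grow when \ETSE{} drives the agent to an uncertain frontier state, and (b) that \ETSE{}'s objective (\ref{eq:J_etse}) of maximizing cumulative $\|\bm{\phi}\|_{W_t^{-1}}$ actually produces this expansion, leveraging the efficiency-metric observation of Section~\ref{subesec:confidence_interval} that information gain for reward and safety are driven by the same quantity. The rest of the argument is a combination of standard optimism-based regret decomposition and the confidence-interval lemmas already proved, so the novelty of the proof concentrates in handling the interaction between the OFU-greedy phase and the safety-expansion phase.
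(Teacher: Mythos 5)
Your proposal follows essentially the same route as the paper's proof: optimism ($J^*\ge V^*$, the paper's Lemma~\ref{lemma_J_ge_V}), a telescoped confidence-width bound over the $H$-step horizon via Lemma~\ref{lemma_confidence_epsilon} yielding the $\beta_r\sqrt{2Hd\log((t+H)/d)}$ term, and a $V_{\max}\cdot\delta_g$ charge for the bad event, with $T^*=t^*+|\bar{Z}_0(S_0)|$ justified exactly as in the paper's Lemma~\ref{lemma:prob_escape} (worst case: \ETSE{} fires every step until the pessimistic region is fully expanded). The only cosmetic difference is that the paper packages the $V_{\max}\cdot\delta_g$ term through the generalized induced inequality of \citet{strehl2005theoretical} applied to the escape event from $\Omega=\mathcal{X}_{t^*}^-$, whereas you charge it directly to the confidence-interval failure event; the paper's Lemma~\ref{lemma:prob_escape} shows these are the same event.
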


The proof is in Appendix E.
An outline of the proof is as follows.
At each iteration, the agent encounters one out of the following two events.
In one event, the \ETSE~algorithm is not triggered, and the agent can execute an action to purely optimize its policy.
In the other event, $\bm{s}_{t+1}^* \ne \eta$ is satisfied, and the agent focuses on exploring the safe regions.
The worst case scenario for the sample efficiency is when the second event continues to occur; that is, the agent is required to explore the whole safe space.
However, after a sufficiently large number of time steps greater than $T^*$, the agent can achieve an $\varepsilon_V$-close policy by deciding the next action on the basis of the OFU principle in a fully-explored safe region.
Note that, in the case of a short horizon, the agent cannot achieve the near-optimal policy; hence, we assume $H > T^*$.

\section{Experiments}
\label{sec:experiment}

We evaluate the performance of our \algo\ in two experiments.
In the first experiment using Gym-MiniGrid~\cite{gym_minigrid}, we show that our algorithm is better than existing ones backed by theory including~\cite{wachi_sui_snomdp_icml2020} in terms of sample efficiency and scalability.
In the second experiment based on Safety-Gym~\cite{Ray2019}, we compare our algorithm with advanced deep RL algorithms incorporating constraints.
We also demonstrate the effectiveness of the \ETSE~algorithm.
For future research, our code is open-sourced.\footnote{\url{https://github.com/akifumi-wachi-4/spolf}}

\subsection{Grid World}

\paragraph{Settings.}
We constructed a simulation environment based on Gym-MiniGrid \cite{gym_minigrid}. 
We considered a $25 \times 25$ grid in which each grid was associated with a randomly generated feature vector with the dimension $d = 5$.
Coefficients for reward and safety were also randomly generated; thus, true reward and safety function values were allocated to each grid along with a feature vector. 
At every time step, an agent chose the next action from five candidates: \textit{stay, up, right, down}, and \textit{left}.
The agent predicted the reward and safety functions while learning the relationship between the feature vector and the two functions with GLMs.
In this simulation, we allowed the agent to 1) observe the feature vector in the $7 \times 7$ neighboring grids that are in the front of the agent and 2) obtain the feature vector and noisy measurements of the reward and safety values at the current state. 
As prior information, the agent received $10$ samples of $(\bm{\phi}, g)$, to initialize the internal GLM model regarding safety.
We considered a linear model with $\mu(x) = x$ for both reward and safety.
Finally, we set $\gamma = 0.999$, $\delta_r = \delta_g = 0.05$, and $h=0.1$, and optimized a policy with policy iteration.

We compared our \algo\ with the following six baselines.
The first is called \textsc{Oracle}, and it has a non-exploratory agent who knows the true reward and safety functions a priori.
The second is called \textsc{Unsafe GLM}, in which a safety-agnostic agent maximizes a cumulative reward on the basis of GLMs.
The third is called \textsc{Random}, in which an agent randomly chooses the next action.
The last three baselines are all stepwise approaches as in \citet{wachi_sui_snomdp_icml2020}.
The fourth baseline is \textsc{Step safe GLM}, in which an agent first expands the safe region and then optimizes the cumulative reward in the certified safe region while learning the reward and safety function structures via GLMs.
The fifth is \textsc{GP (feature)}, which is a GP-based stepwise safe exploration and optimization where the inputs of GP are $d$-dimensional feature vectors.
The final baseline is called \textsc{GP (state)}, a stepwise approach proposed in \citet{wachi_sui_snomdp_icml2020} under the assumption that the reward and safety functions have regularity with regard to state.
We used the reward and the number of unsafe actions as evaluation metrics.

\begin{figure*}[t]
\centering
	\subfigure[]{%
		\includegraphics[width=62mm]{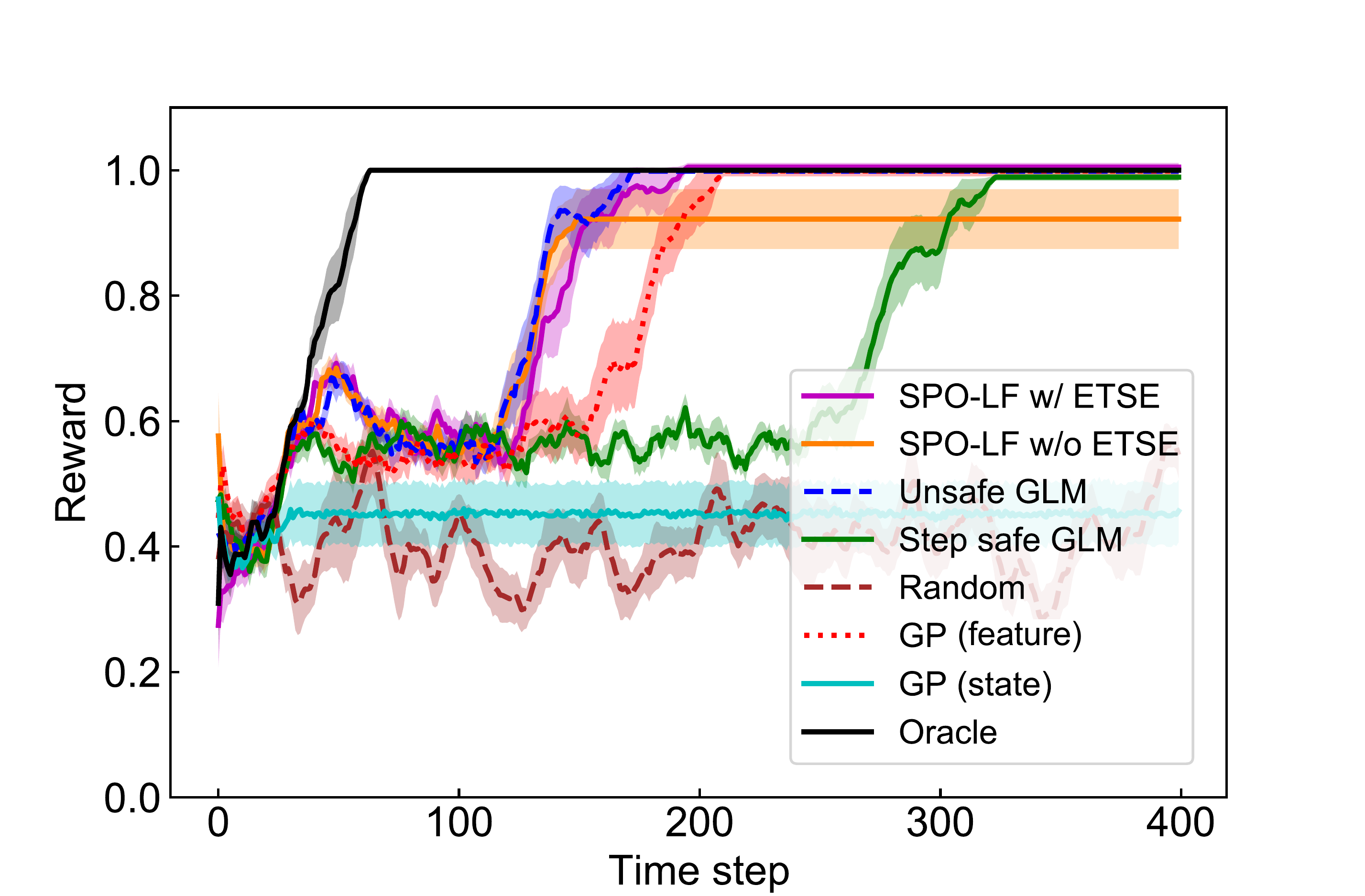}}%
	\hspace{7mm}
	\subfigure[]{%
		\includegraphics[width=56mm]{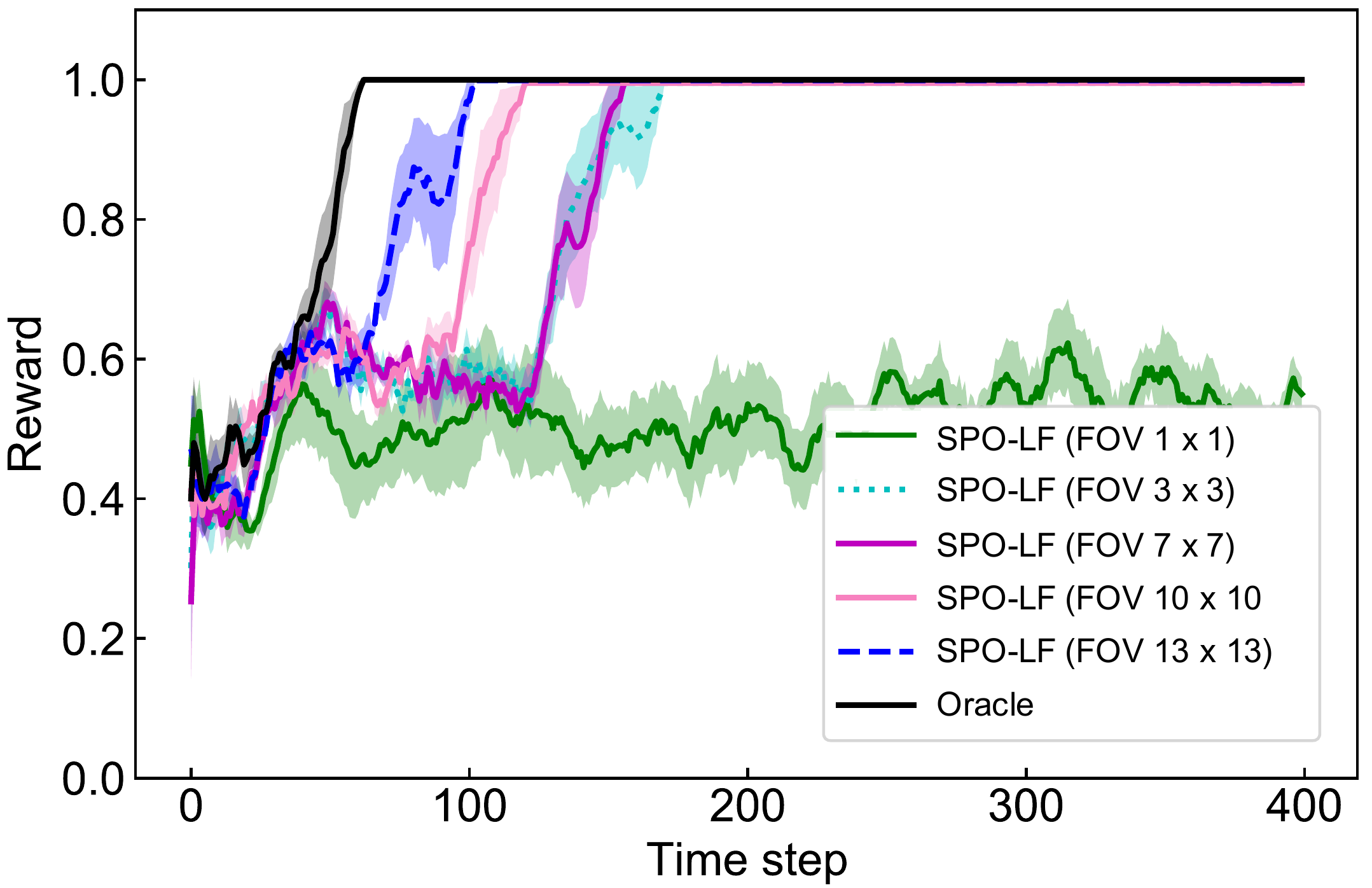}}%
	\caption{(a) Comparison of performance of \algo\ and baselines. Standard error is shown as shaded area. (b) Comparison of performance of the \algo\ with different sizes of field of view.}
\label{fig:syn_result}
\end{figure*}

\begin{wrapfigure}{r}{54mm}
    \vspace{-0.10in}
    \centering
    \includegraphics[width=50mm]{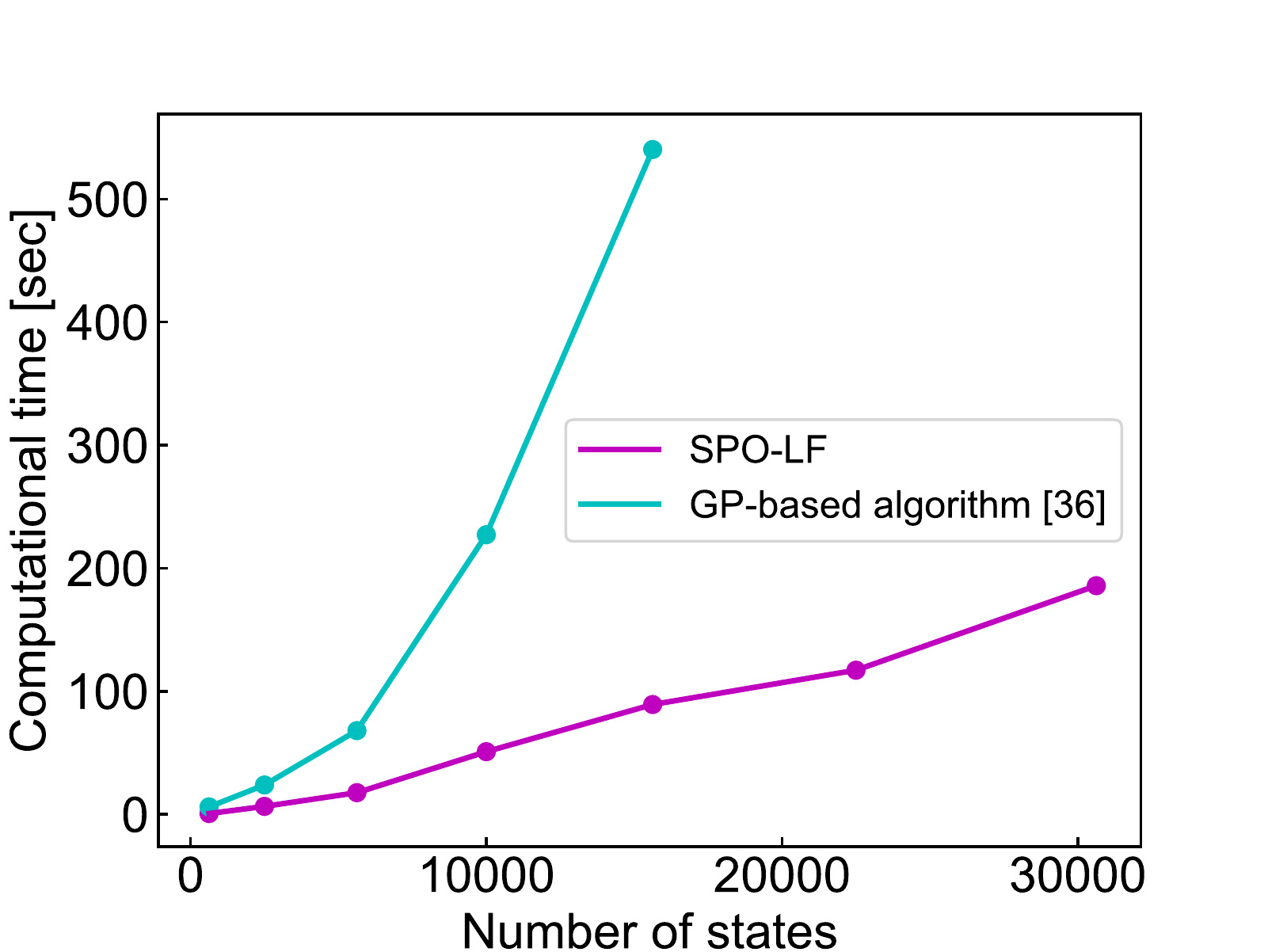}
    \caption{Computational time of GLM-based \algo\ and GP-based algorithm (i.e., \citet{wachi_sui_snomdp_icml2020}).}
    \label{fig:syn_result_3}
\end{wrapfigure}
\textbf{Results. }
Figure~\ref{fig:syn_result}(a) compares the performance with regard to the cumulative reward of each algorithm.
\algo, \textsc{Unsafe GLM}, \textsc{Step safe GLM}, and \textsc{GP (feature)} achieved a near-optimal policy.
The convergence of \algo\ was faster than \textsc{Step safe GLM} and \textsc{GP (feature)} but slower than \textsc{Unsafe GLM}.
This is because \algo\ could not always execute the optimal action for maximizing the cumulative reward to guarantee safety.
Also, without \ETSE, the agent often got stuck in a state with a non-optimal reward.
Figure~\ref{fig:syn_result}(b) compares the performance of \algo\ with different sizes of field of view (FOV) from $1 \times 1$ to $13 \times 13$.
For a larger FOV, the convergence of the algorithm got faster.
Given that \textsc{SPO-LF (FOV $1 \times 1$)} corresponding to previous work (e.g., \cite{turchetta2016safe, wachi_sui_snomdp_icml2020}) dealt with only near-sighted observations, it is useful to incorporate far-sighted observations as in our new framework.
Figure~\ref{fig:syn_result_3} compares the computational time for different environment sizes.
\algo\ is more scalable and applicable to large-scale problems given that the GP-based algorithms did not run in a reasonable amount of time for large environments such as that with $150 \times 150$ states.
In the GLM-based methods, most of the computational cost when obtaining confidence bounds is for calculating $\lambda_{\max}(W_t^{-1})$.
This computational complexity is $\mathcal{O}(d^3)$, which does neither depend on the number of states nor samples.
Finally, \textit{the average number of unsafe actions} was $\bm{3.4}$ for \textsc{Unsafe GLM}, $\bm{49.0}$ for \textsc{Random}, and $\bm{0.0}$ for the other methods.

\subsection{Safety-Gym}

\paragraph{Settings.}

The second experiment was based on Safety-Gym \cite{Ray2019} with continuous state and action spaces. 
The objective of this experiment was to compare our \algo~ with several notable safe RL algorithms including CPO \cite{achiam2017constrained}, PPO-Lagrangian, and TRPO-Lagrangian.
The last two baselines are ones that introduce that Lagrangian method into a safety-agnostic algorithm (i.e., PPO~\cite{schulman2017proximal} and TRPO~\cite{schulman2015trust}) to incorporate constraints (for more details, see Section~5 in \citet{Ray2019}).
In this experiment, as shown in Figure~\ref{fig:safety-gym}(a), one goal (in green) and multiple hazards (in blue) were randomly placed, and the agent (in red) needed to reach the goal without visiting hazards.
For encouraging the agent to reach the goal, we set two kinds of reward: 1) a reward of $1.0$ when arriving at the goal and 2) a reward of $0.01 \cdot \Delta_\text{goal}$ when getting closer to the goal, where $\Delta_\text{goal}$ is the amount of decrease in distance from the goal.
To implement \algo, we discretized the environment into $50 \times 50$ grids and used a computationally inexpensive policy iteration algorithm proposed in \citet{scherrer2012approximate} for policy optimization.
The feature vector $\phi$ was constructed by LiDAR observations (i.e., the distance from the goal or hazards).
Since reward and safety function values vary steeply according to the existence of the goal or hazards, we use the sigmoid function $\mu(x) = \frac{1}{1 + e^{-x}}$ as the link function.
As for implementing the three baselines, our experiments largely depended on the ``safety-starter-agent'' repository (\url{https://github.com/openai/safety-starter-agents.git}).
For the initialization, we provided all the agents with $40$ reward and safety samples as prior information.
For more detailed experimental settings, see Appendix F.

\textbf{Results. }
Figures~\ref{fig:safety-gym}(b) and \ref{fig:safety-gym}(c) show the experimental results.
We used the reward and the number of unsafe actions as evaluation metrics and calculated the mean and standard error after running each method $200$ times.
As shown in Figure~\ref{fig:safety-gym}(b), our \algo~exhibited comparable performance to the baselines with regard to the sample efficiency.
However, our algorithm \textit{did not execute even a single unsafe action}, while the other three methods optimized their policies while making mistakes.

\begin{figure*}[t]
\centering
	\subfigure[Environment]{%
		\includegraphics[clip, height=28mm]{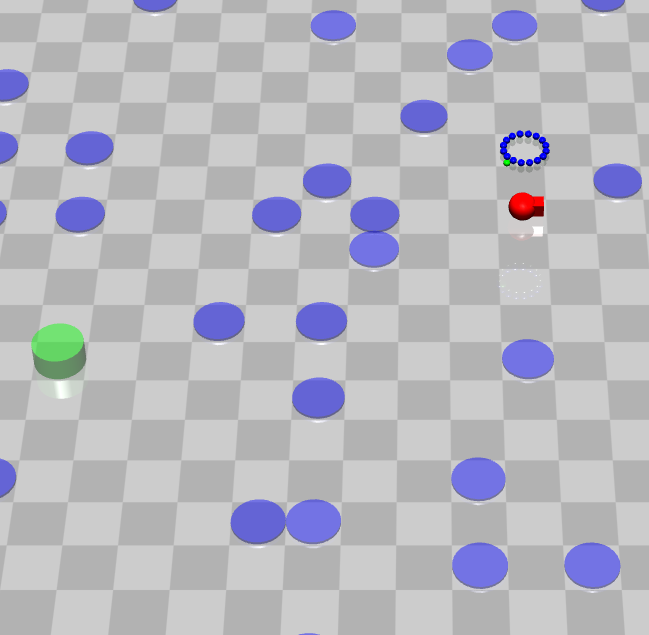}}%
	\hspace{12mm}
	\subfigure[Reward]{%
		\includegraphics[clip, height=31mm]{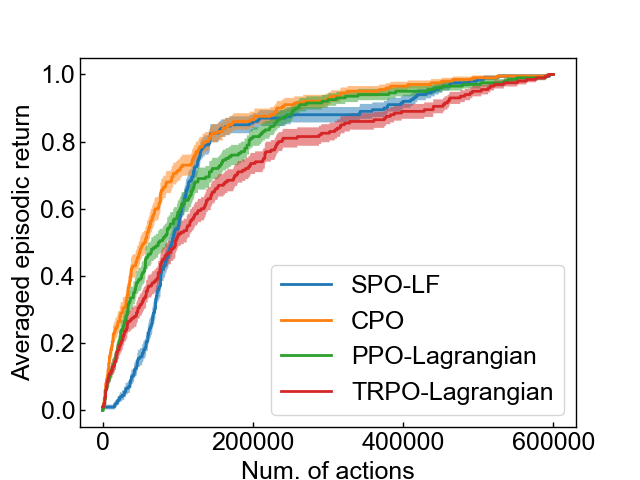}}%
	\hspace{9mm}
	\subfigure[Number of unsafe actions]{%
		\includegraphics[clip, height=31mm]{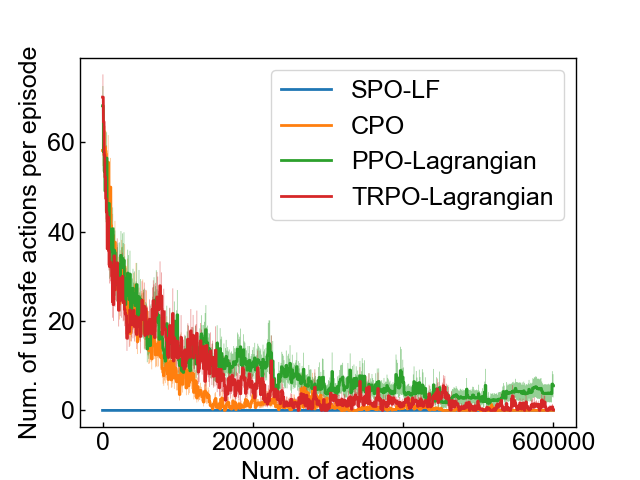}}%
	\caption{Experiment with Safety-Gym. (a) Example screen capture from our simulation environment with a goal and hazards. (b) Average reward over episodes. (c) Number of unsafe actions per episode. Note that our \algo~did not execute even a single unsafe action.}
	\label{fig:safety-gym}
\end{figure*}

\section{Conclusion}
We formulated a new problem characterized by safety-constrained MDPs with local feature and then proposed the \algo\ algorithm for safely optimizing a policy in an a priori unknown environment.
\algo\ efficiently and safely optimizes a policy by leveraging feature information while expanding the safe space as necessary by using the \ETSE~algorithm. 
Theoretical analysis showed that our algorithm obtains a near-optimal policy while guaranteeing safety, with a high probability.
Our experiments showed that our algorithm 1)~achieved better efficiency and scalability than previous safe exploration methods with theoretical guarantees and 2)~behaved more safely than existing advanced deep RL methods with constraints.

We consider that our proposed method compensates for the shortcomings of safe RL methods with theoretical guarantees (e.g., \cite{turchetta2016safe}, \cite{wachi_sui_snomdp_icml2020}) and advanced deep RL methods (e.g., \cite{achiam2017constrained}), which potentially sets out a research direction to bridge the gap between two distinct methods.
However, there are several neglected problems we as the community should address in future work.
One of the biggest problems is that our proposed algorithm (that is more scalable than highly-theoretical ones) is still far from practical in real applications with continuous state and action spaces.
It would be an interesting direction to develop as scalable algorithm as advanced deep RL algorithms while maintaining theoretical safety guarantee.

Finally, we believe that safety is an essential requirement for applying RL in many real problems and have not found any negative societal impact of our algorithm.
However, we need to remain aware that any sequential decision-making algorithms are vulnerable to misuse and ours is no exception.

\section*{Acknowledgments}
We deeply appreciate the anonymous reviewers for constructive comments.
This work is partially funded by Tsinghua GuoQiang Research Institute.

\bibliographystyle{apalike}
\bibliography{reference}

\newpage

\newpage
\onecolumn

\begin{center}
    \textbf{\Large{Appendices}}
\end{center}

\section*{A. Summary of Upper and Lower Bounds.}

For easy understanding, we summarize the upper and lower bounds in terms of reward and safety, inferred by GLMs.
\begin{table*}[ht]
\caption{Confidence bounds of reward and safety functions.}
\vskip 0.10in
\begin{center}
\begin{sc}
  \begin{tabular}{*{3}{lcc}}
    \toprule
    & $\bm{s} \in \Psi_t$ (feature available) & $\bm{s} \notin \Psi_t$ (feature unavailable) \\
    \midrule
    Reward & $[\, \mu(\bm{\phi}_{\bm{s}}^\top \tilde{\theta}_r) \pm \beta_r \cdot \|\bm{\phi}_{\bm{s}}\|_{W_t^{-1}} \,]$ & $[\, 0, \mu( \|\tilde{\theta}_r\| ) \pm \beta_r \cdot \lambda_{\max}(W_t^{-1}) \,]$ \\
    Safety & $[\, \mu(\bm{\phi}_{\bm{s}}^\top \tilde{\theta}_g) \pm \beta_g \cdot \|\bm{\phi}_{\bm{s}}\|_{W_t^{-1}} \,]$ & $[\, 0, \mu( \|\tilde{\theta}_g\| ) + \beta_g \cdot \lambda_{\max}(W_t^{-1}) \,]$ \\
    \bottomrule
  \end{tabular}
\end{sc}
\end{center}
\label{tab:bounds}
\end{table*}

\section*{B. Preliminary Lemma}

\begin{lemma}
\label{lemma:prelim_max_min}
For two arbitrary functions $f_1(x)$ and $f_2(x)$, the following inequality holds:
\begin{alignat*}{2}
\max_x f_1(x) - \max_x f_2(x) \ge \min_x (f_1(x) - f_2(x)).
\end{alignat*}
\end{lemma}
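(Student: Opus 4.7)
The plan is to exploit the argmax of $f_2$ as a single evaluation point that connects both sides of the inequality. Concretely, I would let $y^\star \in \argmax_x f_2(x)$, so that $\max_x f_2(x) = f_2(y^\star)$, and observe the trivial bound $\max_x f_1(x) \ge f_1(y^\star)$ coming from the definition of the maximum.

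Subtracting the equality from the inequality, I obtain $\max_x f_1(x) - \max_x f_2(x) \ge f_1(y^\star) - f_2(y^\star)$. The right-hand side is a particular value of the difference $f_1 - f_2$ at $x = y^\star$, so it is in turn lower bounded by $\min_x (f_1(x) - f_2(x))$ by definition of the minimum. Chaining these two inequalities closes the argument.

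The only subtlety worth flagging is existence of the argmax; if we want the proof to be fully general for functions whose maxima are not attained, I would replace $y^\star$ with an $\varepsilon$-approximate maximizer, perform the same chain of inequalities up to an $\varepsilon$ slack, and then let $\varepsilon \downarrow 0$. Otherwise (for instance on finite domains, which is the setting of this paper since $\mathcal{S}$ and $\mathcal{A}$ are finite) the one-line argument above suffices, so there is no real obstacle.
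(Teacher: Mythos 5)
Your proof is correct and is essentially the same argument as the paper's: the paper invokes the subadditivity inequality $\max_x\{f_2(x)-f_1(x)\} + \max_x f_1(x) \ge \max_x f_2(x)$ and rearranges, which is exactly your evaluation-at-a-maximizer-of-$f_2$ step in disguise. Your version is if anything slightly tidier, since you also flag how to handle suprema that are not attained (irrelevant here given the finite state space).
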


\begin{proof}
For two arbitrary functions $f_1(x)$ and $f_2(x)$, the following inequalities hold:
\begin{alignat*}{2}
\max_x \{ f_2(x) - f_1(x) \} + \max_x f_1(x) &\ge \max_x f_2 \\
\max_x f_1(x) - \max_x f_2(x) 
&\ge - \max_x \{ f_1(x) - f_2(x) \} \\
&= \min_x \{ f_1(x) - f_2(x) \}.
\end{alignat*}
Then, the desired lemma is obtained.
\end{proof}

\section*{C. Lemmas on GLMs}

\subsection*{C.0. Preliminary lemma on GLMs}

\begin{lemma}
\label{lemma:existence_safe_region}
Let $\theta_g^*$ be the true coefficient for safety.
Then, we have $\mu_g(\|\theta_g^*\|) \ge h$.
\end{lemma}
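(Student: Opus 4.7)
The plan is to chain three observations: a state in $S_0$ is safe, its feature vector has norm at most $1$, and $\mu_g$ is strictly increasing. First I would invoke Assumption~\ref{assumption_prior_safety} to fix any $\bm{s}_0 \in S_0$ (the set is nonempty since the agent starts in it). By the a priori safety of $S_0$, we have $g(\bm{s}_0) \ge h$, and by Assumption~\ref{assumption:glm} this rewrites as $\mu_g(\bm{\phi}_{\bm{s}_0}^\top \theta_g^*) \ge h$.

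Next I would apply Cauchy--Schwarz together with the fact that $\bm{\phi}$ maps into $\mathbb{B}^d$, so $\|\bm{\phi}_{\bm{s}_0}\| \le 1$. This yields $\bm{\phi}_{\bm{s}_0}^\top \theta_g^* \le \|\bm{\phi}_{\bm{s}_0}\| \cdot \|\theta_g^*\| \le \|\theta_g^*\|$. Since Assumption~\ref{assumption:glm} states that $\mu_g$ is strictly increasing, monotonicity gives $\mu_g(\bm{\phi}_{\bm{s}_0}^\top \theta_g^*) \le \mu_g(\|\theta_g^*\|)$.

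Combining the two inequalities yields the desired chain
\[
h \;\le\; \mu_g(\bm{\phi}_{\bm{s}_0}^\top \theta_g^*) \;\le\; \mu_g(\|\theta_g^*\|),
\]
which is exactly the claim. There is no real obstacle here: the argument is a three-line chain whose only subtlety is that it implicitly relies on the feature vectors living in the unit ball, which is baked into the problem formulation, and on the agent actually being initialized in $S_0$, which is baked into Assumption~\ref{assumption_prior_safety}. The lemma is essentially a sanity check establishing that the loose upper bound $\mu_g(\|\tilde{\theta}_g\|) + \beta_g \lambda_{\max}(W_t^{-1})$ used outside $\Psi_t$ (see Lemma~\ref{lemma:max_g}) is asymptotically consistent with the existence of safe states.
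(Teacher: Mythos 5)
Your proof is correct and follows the same route as the paper's: fix a safe state in $S_0$, bound $\bm{\phi}_{\bm{s}}^\top \theta_g^* \le \|\theta_g^*\|$ via Cauchy--Schwarz and the unit-ball feature constraint, and conclude by the strict monotonicity of $\mu_g$. No differences worth noting.
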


\begin{proof}
By Cauchy–Schwarz inequality, for all $\bm{\phi}_{\bm{s}}$ such that $\| \bm{\phi}_{\bm{s}} \| \le 1$, the following inequality holds:
\begin{equation*}
\|\theta_g^*\| \ge \bm{\phi}_{\bm{s}}^\top \theta_g^*.
\end{equation*}
By Assumption~\ref{assumption:glm}, $\mu_g(\cdot)$ is a strictly increasing function; that is,
\begin{equation*}
\mu_g(\|\theta_g^*\|) \ge \mu_g(\bm{\phi}_{\bm{s}}^\top \theta_g^*).
\end{equation*}
Assumption~\ref{assumption_prior_safety} implies that there exists $\bm{\phi}_{\bm{s}}$ such that
\[
\mu_g(\bm{\phi}_{\bm{s}}^\top \theta_g^*) \ge h.
\]
Therefore, we obtained the desired lemma.
\end{proof}

\subsection*{C.1. Proofs of Lemma~\ref{lemma:confidence_bound}}
\begin{proof}
See Theorem~1 in \citet{li2017provably}.
\end{proof}

\subsection*{C.2. Proof of Lemma~\ref{lemma:max_r_g}}

\begin{proof}
First, we obtain the upper bound of $\diamond(\bm{s})$ for any $\bm{s} \in \mathcal{S}$.
Let $\bm{\phi}^*_{\diamond, \bm{s}}$ be the feature for achieving the maximum value of $\diamond$.
Then, the following chain of equations hold, with a probability of at least $1 - \delta_\diamond$:
\begin{alignat*}{2}
\max_{\bm{s} \in \mathcal{S}} \diamond(\bm{s})
& = \mu_\diamond ( \langle \bm{\phi}^*_{\diamond, \bm{s}}, \theta_\diamond^* \rangle ) \\
& \le \mu_\diamond ( \langle \bm{\phi}^*_{\diamond, \bm{s}}, \tilde{\theta}_\diamond \rangle) + \beta_\diamond \cdot \|\bm{\phi}^*_{\diamond, \bm{s}}\|_{W_n^{-1}} \\
& \le \mu_\diamond ( \| \tilde{\theta}_\diamond \| ) + \beta_\diamond \cdot \|\bm{\phi}^*_{\diamond, \bm{s}}\|_{W_n^{-1}} \\
& \le \mu_\diamond ( \| \tilde{\theta}_\diamond \| ) + \beta_\diamond \cdot \lambda_{\max}(W_n^{-1}).
\end{alignat*}
In the above chain of inequalities, we used Cauchy–Schwarz inequality and $\|\bm{\phi}^*_{\diamond, \bm{s}}\| \le 1$ in the second line and then used $\|\bm{\phi}^*_{\diamond, \bm{s}}\|_{W_n^{-1}} \le \lambda_{\max}(W_n^{-1})$ in the third line.

The lower bound can be simply obtained by definition; that is, 
\begin{alignat*}{2}
\min_{\bm{s} \in \mathcal{S}} \diamond(\bm{s}) \ge 0.
\end{alignat*}
Then, we have the desired lemma.
\end{proof}

\subsection*{C.3. Proof of Lemma~\ref{lemma:max_g}}
\begin{proof}

By Lemma~\ref{lemma:max_r_g}, with a probability of at least $1 - \delta_g$, we have the following inequality in terms of the confidence bounds of the safety function:
\begin{alignat*}{2}
0 \le \max_{\bm{s} \in \mathcal{S}} g(\bm{s}) \le \mu_g ( \|\tilde{\theta}_g\| ) + \beta_g \cdot \lambda_{\max}(W_n^{-1}).
\end{alignat*}
Also, by combining Lemma~\ref{lemma:existence_safe_region} and
\[
\mu_g ( \|\theta^*_g\| ) \le \mu_g ( \|\tilde{\theta}_g\| ) + \beta_g \cdot \lambda_{\max}(W_n^{-1}),
\]
we have
\[
h \le \mu_g ( \|\theta^*_g\| ) \le \mu_g ( \|\tilde{\theta}_g\| ) + \beta_g \cdot \lambda_{\max}(W_n^{-1}).
\]
Finally, we obtained the desired lemma.
\end{proof}

\subsection*{C.4. Proof of Lemma~\ref{lemma_confidence_epsilon}}

\begin{proof}
For all $t > t^*$, the following chain of  inequalities hold:
\begin{alignat*}{2}
\min \left\{ \frac{\epsilon}{\beta_g}, 1 \right\}
&\ \ge \left( \lambda_{\min}(\Sigma) t - C_1 \sqrt{td} - C_2 \sqrt{t \log(\delta_g^{-1})} \right)^{-1} \\
&\ \ge \lambda_{\max}(W_n^{-1}).
\end{alignat*}
In the above calculation, we used Lemma~1 in \citet{li2017provably} and $\lambda_{\max}(W_n^{-1}) = \frac{1}{\lambda_{\min}(W_n)}$.
Therefore, the following two statements are satisfied.
First, because we have $\beta_g \lambda_{\max}(W_n^{-1}) \le \epsilon$, the following inequalities hold:
\begin{alignat*}{2}
|\, \mu_g(\bm{\phi}^\top \theta_g^*) - \mu_g(\bm{\phi}^\top \tilde{\theta}_g) \,| 
&\ \le \beta_g \lambda_{\max}(W_n^{-1}) \\
&\ \le \epsilon.
\end{alignat*}
Second, by Lemma~2 in \citet{li2017provably}, the following inequality holds under the condition that $\lambda_{\max}(W_n^{-1}) \le 1$:
\begin{alignat*}{2}
\sum_{\tau=t+1}^{t+H} \|\bm{\phi}_{\bm{s}_\tau}\|_{W_\tau^{-1}} \le \sqrt{2Hd\log \left(\frac{t+H}{d}\right)}.
\end{alignat*}
\end{proof}

\section*{D. Lemmas on Near-optimality}
\begin{lemma}
\label{lemma_J_ge_V}
Let $J^*(\bm{s}_t)$ be the value function calculated by our algorithm. Then, $J^*(\bm{s}_t)$ satisfies the following inequality:
\[
J^*(\bm{s}_t) \ge V^*(\bm{s}_t).
\]
\end{lemma}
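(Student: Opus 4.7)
The plan is to argue that $J^{*}$ is an optimistic surrogate for $V^{*}$ in two senses simultaneously: the per-step reward $R(\bm{s})$ used by the algorithm upper-bounds the true reward $r(\bm{s})$, and the feasible set $\mathcal{X}_t^{+}$ over which $J^{*}$ is maximized contains the true safe, reachable, and returnable set that constrains $V^{*}$. Combining these on the Bellman recursion will give $J^{*}(\bm{s}_t)\ge V^{*}(\bm{s}_t)$.

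First, I would show $R(\bm{s}) \ge r(\bm{s})$ for every $\bm{s}\in\mathcal{S}$ with probability at least $1-\delta_r$. For $\bm{s}\in\Psi_t$, Lemma~\ref{lemma:confidence_bound} applied to the reward gives $\mu_r(\bm{\phi}_{\bm{s}}^\top\tilde{\theta}_r)+\beta_r\|\bm{\phi}_{\bm{s}}\|_{W_t^{-1}} \ge r(\bm{s})$, which is exactly the first branch of $R(\bm{s})$ in (\ref{eq:J_R}). For $\bm{s}\notin\Psi_t$, Lemma~\ref{lemma:max_r_g} gives $r(\bm{s}) \le \mu_r(\|\tilde{\theta}_r\|)+\beta_r\lambda_{\max}(W_t^{-1})$, the second branch of $R(\bm{s})$. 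Hence $R\ge r$ pointwise on $\mathcal{S}$.

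Second, I would show that the true safe set $\bar{Z}_{0}(S_0)$ used to define $V^{*}$ is a subset of $\mathcal{X}_t^{+}$. The analogous argument using Lemmas~\ref{lemma:confidence_bound} and~\ref{lemma:max_r_g} for safety yields $u_t(\bm{s})\ge g(\bm{s})$ for every $\bm{s}$, so any state satisfying $g(\bm{s})\ge h$ lies in $S_t^{+}$. A straightforward induction on the recursive operators $\bar{Y}_{\text{reach}}$ and $\bar{Y}_{\text{return}}$ (using that $S_t^+$ already contains the true safe set and that $\mathcal{X}^+_{t-1}\supseteq \mathcal{X}^+_0 \supseteq S_0$) then yields $\mathcal{X}_t^{+}\supseteq \bar{Z}_{0}(S_0)$.

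Third, with these two inclusions I would conclude the lemma by induction on the horizon. Let $J^{*}_k$ and $V^{*}_k$ denote the $k$-step truncated value functions; at $k=0$ both equal $0$. For the inductive step,
\begin{align*}
J^{*}_{k}(\bm{s}_t)
&= \max_{\bm{s}\in\mathcal{X}_t^{+}}\bigl[R(\bm{s})+\gamma J^{*}_{k-1}(\bm{s})\bigr] \\
&\ge \max_{\bm{s}\in \bar{Z}_{0}(S_0)}\bigl[r(\bm{s})+\gamma V^{*}_{k-1}(\bm{s})\bigr]
 = V^{*}_{k}(\bm{s}_t),
\end{align*}
where the inequality uses $R\ge r$, the inductive hypothesis $J^{*}_{k-1}\ge V^{*}_{k-1}$, and the set inclusion $\bar{Z}_{0}(S_0)\subseteq \mathcal{X}_t^{+}$. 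Sending $k\to H$ (or invoking the fixed point) gives $J^{*}(\bm{s}_t)\ge V^{*}(\bm{s}_t)$.

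The main obstacle is the set inclusion in the second step: showing $\bar{Z}_{0}(S_0)\subseteq \mathcal{X}_t^{+}$ requires carefully propagating the pointwise bound $u_t(\bm{s})\ge g(\bm{s})$ through the nested reachability and returnability limits used to define both sets, and ensuring that the pessimistic anchor $\mathcal{X}^+_{t-1}$ used in (\ref{eq:optimistic_X}) is not so small that it breaks the inclusion. Everything else is essentially a pointwise application of Lemmas~\ref{lemma:confidence_bound} and~\ref{lemma:max_r_g} followed by a monotone Bellman comparison.
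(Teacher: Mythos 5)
Your proposal is correct and follows essentially the same route as the paper's proof: both arguments rest on the pointwise optimism $R(\bm{s}) \ge r(\bm{s})$ from Lemmas~\ref{lemma:confidence_bound} and~\ref{lemma:max_r_g} together with the set inclusion $\bar{Z}(S_0) \subseteq \mathcal{X}_t^{+}$ (which the paper also asserts without a detailed induction), and then close the Bellman recursion. The only cosmetic difference is that the paper propagates the recursion via the elementary inequality $\max f_1 - \max f_2 \ge \min(f_1 - f_2)$ applied to indicator-weighted value functions, whereas you use a direct induction on the truncated horizon --- both are valid and equivalent in substance.
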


\begin{proof}
Let $I_Z: \mathcal{S} \rightarrow \{0, 1\}$ denote the following safety indicator function:
\begin{eqnarray}
I_Z(\bm{s}) :=
  \left\{\begin{array}{ll}
  1 & \quad \text{if}\ \ \bm{s} \in \bar{Z}_{\epsilon_g}(S_0), \\
  0 & \quad \text{otherwise}.
  \end{array}
  \right.
  \label{eqn:safety_indicator}
\end{eqnarray}
Then, the following chain of equations and inequalities holds:
\begin{alignat*}{2}
&\ J^*(\bm{s}_t) - V^*(\bm{s}_t) \\
= &\ \max_{s_{t+1} \in \mathcal{X}_{t}^+} \left[\ R(\bm{s}_{t+1}) + \gamma J^*(\bm{s}_{t+1}) \ \right] - \max_{s_{t+1} \in \bar{Z}_{\epsilon_g}(S_0)} \left[\ r(\bm{s}_{t+1}) + \gamma  V_{\mathcal{M}}^*(\bm{s}_{t+1}) \ \right] \\
\ge &\ \max_{s_{t+1} \in \bar{Z}_{\epsilon_g}(S_0)} \left[\ R(\bm{s}_{t+1}) + \gamma J^*(\bm{s}_{t+1}) \ \right] - \max_{s_{t+1} \in \bar{Z}_{\epsilon_g}(S_0)} \left[\ r(\bm{s}_{t+1}) + \gamma  V_{\mathcal{M}}^*(\bm{s}_{t+1}) \ \right] \\
= &\ \max_{s_{t+1}} \left[\ I_Z(\bm{s}_{t+1})  \{R(\bm{s}_{t+1}) + \gamma J^*(\bm{s}_{t+1})\} \ \right] - \max_{s_{t+1}} \left[\ I_Z(\bm{s}_{t+1})  \{r(\bm{s}_{t+1}) + \gamma  V_{\mathcal{M}}^*(\bm{s}_{t+1}) \}\ \right] \\
\ge &\ \min_{s_{t+1}} \left[\ I_Z(\bm{s}_{t+1}) \{ R(\bm{s}_{t+1}) - r(\bm{s}_{t+1}) \} + \gamma I_Z(\bm{s}_{t+1}) \{ J^*(\bm{s}_{t+1}) - V^*(\bm{s}_{t+1}) \} \ \right].
\end{alignat*}

The second line follows from $\mathcal{X}_{t}^+ \supseteq \bar{Z}_{\epsilon_g}(S_0)$, and the third line follows from the definition of $I_Z$.
Also, the fourth line follows from Lemma~\ref{lemma:prelim_max_min}.
By definition of $R(\bm{s})$, the following equation holds with a probability of at least $1-\delta^r$:
\begin{alignat*}{2}
\min_{\bm{s}_t} [\, J^*(\bm{s}_t) - V^*(\bm{s}_t) \,] \ge  \gamma \cdot \min_{\bm{s}_{t+1}} \left[\, I_Z(\bm{s}_{t+1}) \{ J^*(\bm{s}_{t+1}) - V^*(\bm{s}_{t+1})\}\, \right]
\end{alignat*}
Repeatedly applying this equation proves the desired lemma. 
Therefore, we have
\[
J^*(\bm{s}_t) \ge V^*(\bm{s}_t)
\]
with high probability.
\end{proof}

\begin{lemma}
\label{lemma_4}
{\rm\textbf{(Generalized induced inequality)}} Let $\bm{\phi}, r, g$ and $\hat{\bm{\phi}}, \hat{r}, \hat{g}$ be the feature and reward function (including the exploration bonus) and safety function that are identical on some set of states $\Omega$ --- i.e., $\bm{\phi} = \hat{\bm{\phi}}$, $r = \hat{r}$, and $g = \hat{g}$ for all $\bm{s} \in \Omega$.
Let $P(A_{\Omega})$ be the probability that a state not in $\Omega$ is generated when starting from state $\bm{s}$ and following a policy~$\pi$.
Assume value is bound in $[0, V_{\max}]$, then
\[
V^\pi(r, \bm{s}, \bm{\phi}, g) \ge V^\pi(\hat{r}, \bm{s}, \hat{\bm{\phi}}, \hat{g}) - V_{\max} P(A_{\Omega}),
\]
where we now make explicit the dependence of the value function on the reward.
\end{lemma}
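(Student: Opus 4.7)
The plan is to couple the two random trajectories generated by following $\pi$ from the common initial state $\bm{s}$ under the two specifications $(r, \bm{\phi}, g)$ and $(\hat{r}, \hat{\bm{\phi}}, \hat{g})$, and then to decompose the resulting expected discounted returns according to whether the trajectory ever leaves the agreement set $\Omega$. Let $A_\Omega$ denote the event that at least one state visited by the rollout lies outside $\Omega$, with $A_\Omega^c$ its complement. Under a suitable coupling, the same trajectory $\bm{s}_0, \bm{s}_1, \ldots$ can be used to evaluate both sides, because $f$ is deterministic and the only randomness is the policy's internal randomization, which we share across the two specifications.

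First, I would observe that on $A_\Omega^c$ the entire trajectory stays in $\Omega$. Since $\bm{\phi}$, $r$, and $g$ agree with $\hat{\bm{\phi}}$, $\hat{r}$, $\hat{g}$ on $\Omega$ by hypothesis, the two sample paths coincide pathwise on $A_\Omega^c$; in particular, the discounted cumulative rewards $G := \sum_\tau \gamma^\tau r(\bm{s}_\tau)$ and $\hat{G} := \sum_\tau \gamma^\tau \hat{r}(\bm{s}_\tau)$ are equal on this event. Writing $V^\pi(r, \bm{s}, \bm{\phi}, g) = \mathbb{E}_\pi[G \mathbf{1}_{A_\Omega^c}] + \mathbb{E}_\pi[G \mathbf{1}_{A_\Omega}]$ and similarly for $\hat{V}^\pi$, subtracting cancels the $A_\Omega^c$ pieces and leaves $V^\pi(r, \bm{s}, \bm{\phi}, g) - V^\pi(\hat{r}, \bm{s}, \hat{\bm{\phi}}, \hat{g}) = \mathbb{E}_\pi[(G - \hat{G}) \mathbf{1}_{A_\Omega}]$.

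Second, on the event $A_\Omega$ both returns lie in $[0, V_{\max}]$, so pointwise $G - \hat{G} \ge -V_{\max}$. Taking expectation then yields $V^\pi(r, \bm{s}, \bm{\phi}, g) - V^\pi(\hat{r}, \bm{s}, \hat{\bm{\phi}}, \hat{g}) \ge -V_{\max} \cdot P(A_\Omega)$, which is precisely the claimed inequality.

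The main obstacle is the coupling step: one must be careful to realize both trajectory distributions on a common probability space so that, conditioned on staying in $\Omega$, the two sample paths truly coincide. Because $f$ is deterministic and $\pi$ depends only on the current state, sharing the policy's random bits gives exactly this property, so the coupling is routine; after that is in place, the decomposition over $A_\Omega$ vs.\ $A_\Omega^c$ and the $[0, V_{\max}]$ bound are immediate. A symmetric argument would yield the two-sided bound $|V^\pi - \hat{V}^\pi| \le V_{\max} P(A_\Omega)$, but only the one-sided version is needed here.
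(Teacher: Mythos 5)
Your proof is correct. Note, though, that the paper does not actually prove this lemma itself: it simply defers to Lemma~8 of \citet{strehl2005theoretical}, which is the classical ``generalized induced inequality'' for stochastic MDPs. Your argument is a self-contained reconstruction of that result, and it is in fact simpler than the general case because in this paper's setting the transition $f$ is deterministic and $\pi$ is a fixed state-to-action map, so the two specifications generate \emph{literally the same} trajectory --- the coupling you worry about in your last paragraph is not merely routine but vacuous, since only the reward labels attached to the states differ, not the states visited. Given that, your decomposition $V^\pi - \hat{V}^\pi = \mathbb{E}[(G-\hat{G})\mathbf{1}_{A_\Omega}]$ together with the pathwise bound $G - \hat{G} \ge -V_{\max}$ on $A_\Omega$ (using $G \ge 0$ and $\hat{G} \le V_{\max}$) gives exactly the claimed one-sided inequality. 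The only pedantic caveat is that the hypothesis as stated bounds the \emph{value} in $[0, V_{\max}]$, whereas your argument uses a pathwise bound on the \emph{return}; with $r \in [0,1]$ and discounting this holds automatically, so nothing is lost. What your route buys is transparency and independence from the external reference; what the paper's route buys is brevity and coverage of the general stochastic-transition case should the assumption of known deterministic $f$ be relaxed.
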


\begin{proof}
The lemma follows from Lemma~8 in \citet{strehl2005theoretical}.
\end{proof}

\begin{lemma}
\label{lemma:prob_escape}
Suppose that the agent optimizes the policy based on our \algo. After $T^*$ time step, let $A_{\Omega}$ be the event in which the agent escapes from $\Omega$.
Then, for all $t > T^*$, we have
\[
P(A_{\mathcal{X}_t^-}) \le \delta_g.
\]
\end{lemma}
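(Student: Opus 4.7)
The plan is to argue that for $t > T^*$, the pessimistic safe set $\mathcal{X}_t^-$ has fully stabilized to the limit $\bar Z_0(S_0)$, so that the agent's policy, which by construction selects only states in $\mathcal{X}_\tau^-$, cannot escape $\mathcal{X}_t^-$ except on the rare event that the safety confidence bound of Lemma~\ref{lemma:confidence_bound} fails. That confidence-bound failure event has probability at most $\delta_g$, which matches the desired bound.

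First, I would exploit the monotonicity of the construction. Since $C_\tau(\bm{s}) = Q_\tau(\bm{s}) \cap C_{\tau-1}(\bm{s})$, the lower bounds $l_\tau(\bm{s}) = \min C_\tau(\bm{s})$ are non-decreasing in $\tau$, which makes $S_\tau^-$ and hence $\mathcal{X}_\tau^-$ monotonically non-decreasing. By Algorithm~\ref{algorithm1}, every state $\bm{s}_{\tau+1}$ the agent visits for $\tau \ge t$ lies in $\mathcal{X}_\tau^- \supseteq \mathcal{X}_t^-$. Consequently, an escape from $\mathcal{X}_t^-$ within the rollout can occur only if the safe set strictly expands at some $\tau > t$; equivalently, only if some state is newly certified as pessimistically safe after time $t$.

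Second, the choice $T^* = t^* + |\bar Z_0(S_0)|$ is designed precisely to preclude such further expansion. For $t > t^*$, Lemma~\ref{lemma_confidence_epsilon} gives $|\mu_g(\bm{\phi}^\top \theta_g^*) - \mu_g(\bm{\phi}^\top \tilde{\theta}_g)| \le \epsilon$ on the high-probability event underlying Lemma~\ref{lemma:confidence_bound}. The additional $|\bar Z_0(S_0)|$ time steps suffice for the \ETSE\ routine to visit every state in the fixed-point safe region, so $\mathcal{X}_t^- = \bar Z_0(S_0)$. Since $\bar Z_0(S_0)$ is the closure under the expansion operator $Z$, no further growth is possible and $\mathcal{X}_\tau^- = \mathcal{X}_t^-$ for all $\tau \ge t$. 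Conditioning on this event, the agent's trajectory cannot leave $\mathcal{X}_t^-$.

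Finally, the only remaining route to escape is failure of the safety confidence bound itself: on the complementary event of Lemma~\ref{lemma:confidence_bound}, of probability at most $\delta_g$, the set $\mathcal{X}_t^-$ may contain a state $\bm{s}$ with $g(\bm{s}) < h$, the monotonicity argument above breaks, and the agent may execute an unsafe action. Combining the two contributions and noting that the good event implies no escape yields $P(A_{\mathcal{X}_t^-}) \le \delta_g$. The main obstacle will be formalizing that $|\bar Z_0(S_0)|$ steps with \ETSE\ activated genuinely exhaust the reachable safe region, and extending Lemma~\ref{lemma:confidence_bound} uniformly over the $H$-step rollout via a union bound without inflating $\delta_g$ beyond what the statement allows.
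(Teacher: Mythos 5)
Your proposal is correct at the same level of rigor as the paper and follows essentially the same route: condition on the event that the safety confidence bounds hold (probability at least $1-\delta_g$), argue that by time $T^* = t^* + |\bar{Z}_0(S_0)|$ the $\epsilon$-accurate safety model plus the \ETSE\ visits have fully expanded $\mathcal{X}_t^-$ so that no escape is possible on the good event, and charge the entire escape probability to the confidence-bound failure. The paper phrases the middle step as a case split on whether the planned trajectory $\bar{q}_t$ lies in $\mathcal{X}_t^-$ rather than via your monotonicity/stabilization argument, but the substance — including the unformalized claim that $|\bar{Z}_0(S_0)|$ additional steps suffice, which you rightly flag — is the same.
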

\begin{proof}
Hereinafter, let $\bar{q}_t$ be the sequence of states planed by the policy $\pi_t$ obtained by the algorithm at time $t$: that is,
\[
\bar{q}_{t} := \{\, \bar{\bm{s}}_t, \bar{\bm{s}}_{t+1}, \ldots, \bar{\bm{s}}_{t+H} \, \},
\]
where $\bar{\bm{s}}_t = \bm{s}_t$ and $\bar{\bm{s}}_{t + \tau + 1} = f(\bar{\bm{s}}_{t+\tau}, \pi_t(\bar{\bm{s}}_{t+\tau}))$ for all $\tau \in [0, H-1]$.

First, let us consider the case that prediction of the safety function values are collect, which occurs with a probability of at least $1 - \delta_g$ (we call \textsc{Event A}).
While the agent optimizes its policy, one of the following two events happens.
\begin{itemize}
    \item \textsc{Event A-1}: $\bar{q}_t \subseteq \mathcal{X}_t^-$.
    \item \textsc{Event A-2}: $\bar{q}_t \nsubseteq \mathcal{X}_t^-$.
\end{itemize}
When \textsc{Event A-1} occurs, the optimal policy calculated based on the current information can be obtained only in $\mathcal{X}_t^-$.
Hence, the agent will not go outside $\mathcal{X}_t^-$ as far as it follows the current policy. 
The worse-case for the agent is when \textsc{Event A-2} continues to occur and the \ETSE~algorithm is triggered at every time step.
Even under this worst condition, however, after $t^*$ time steps, the confidence interval on safety is bounded by $\epsilon$ and then the agent can identify whether or not a state is safe immediately on the basis of far-sighted observations.
After the agent learns the safety function model up to $\epsilon$-accuracy, it takes at most $T^* - t^* = |R_0(S_0)|$ time steps until $\mathcal{X}_t^-$ is sufficiently expanded.

Second, if the prediction of safety is wrong, we cannot avoid the situation that the agent escapes from $\mathcal{X}_t^-$, which occurs with a probability of at most $\delta_g$.

In summary, for all $t > T^*$, only if the prediction of safety is inaccurate, the agent may go outside of $\mathcal{X}_t^-$.
This event (i.e., Event B) happens with a (small) probability at most $\delta_g$.
Hence, the desired lemma is now proved.
\end{proof}

\section*{E. Main Theorems}

\subsection*{E.1. Proof of Theorem~\ref{theo:safety}}
\begin{proof}
By Lemma~\ref{lemma:confidence_bound} and Lemma~\ref{lemma:max_g}, for all $\bm{s} \in \mathcal{S}$, with a probability of at least $1-\delta_g$, we have 
\[
g(\bm{s}) \ge l(\bm{s}).
\]
Our \algo~requires the agent to visit only the state satisfying $l(\bm{s}) \ge h$; hence, the safety constraint
\[
g(\bm{s}) \ge h
\]
is satisfied with a probability of at least $1-\delta_g$.
\end{proof}

\subsection*{E.2. Proof of Theorem~\ref{theo:near_optimality}}
\begin{proof}
Define $\tilde{r}$ and $\tilde{g}$ as the reward function (with the exploration bonus) and safety function, which are used by the \algo.
Let $\hat{r}$ be a reward function equal to $r$ on $\Omega$ and equal to $\tilde{r}$ elsewhere.
Furthermore, let $\tilde{\pi}$ be the policy followed by the \algo\ at time $t$, that is, the policy calculated on the basis of the feature $\bm{\phi}$ and predicted reward $\tilde{r}$ and safety $\tilde{g}$ inferred by the predicted coefficients, (i.e., $\tilde{\theta}_r$ and $\tilde{\theta}_g$).
Finally, let $A_{\Omega}$ be the event in which $\tilde{\pi}$ escapes from $\Omega$. Then, by Lemma~\ref{lemma_4}, we have
\[
V^{\pi_t}(r, \bm{s}_t, \bm{\phi}_{\bm{s}_t}, g) \ge V^{\tilde{\pi}}(\hat{r}, \bm{s}_t, \bm{\phi}_{\bm{s}_t}, g) - V_{\max} P(A_{\Omega}).
\]
In addition, note that, for all $t \ge t^*$ and $\bm{s}_t \in \mathcal{X}_t^-$, because $\hat{r}$ and $\tilde{r}$ differ by at most $\beta_g~\|\bm{\phi}_{\bm{s}_t}\|_{W^{-1}}$ at each state,
\begin{alignat}{2}
\label{eq:bar_eta_R_EB}
|\, V^{\tilde{\pi}}(\hat{r}, \bm{s}_t, \bm{\phi}_{\bm{s}_t}, \tilde{g}) - V^{\tilde{\pi}}(\tilde{r}, \bm{s}_t, \bm{\phi}_{\bm{s}_t}, \tilde{g}) \, |
& \le \sum_{\tau=1}^H \beta_r~\|\bm{\phi}_{\bm{s}_{t+\tau}}\|_{W^{-1}} \nonumber \\
& = \beta_r \cdot \sum_{\tau=1}^H \|\bm{\phi}_{\bm{s}_{t+\tau}}\|_{W^{-1}} \nonumber \\
& \le \beta_r \cdot \sqrt{2Hd\log \left((t+H)/d \right)}.
\end{alignat}
Here, consider the case of $\Omega =\mathcal{X}_{t^*}^-$. Once the safe region is fully explored, $P(A_{\mathcal{X}_{t^*}^-}) \le \delta_g$ holds after
$T^*$ time steps. Then, the following chain of equations and inequalities holds:
\begin{alignat*}{2}
V^{\pi_t}(r, \bm{s}_t, \bm{\phi}_{\bm{s}_t}, g)
\ge&\ V^{\tilde{\pi}}(\hat{r}, \bm{s}_t, \bm{\phi}_{\bm{s}_t}, g) - V_{\max}  \cdot P(A_{\Omega}) \\
=&\ V^{\tilde{\pi}}(\hat{r}, \bm{s}_t, \bm{\phi}_{\bm{s}_t}, g) - V_{\max}  \cdot P(A_{\mathcal{X}^-}) \\
\ge&\ V^{\tilde{\pi}}(\hat{r}, \bm{s}_t, \bm{\phi}_{\bm{s}_t}, g) - V_{\max}  \cdot \delta_g \\
\ge&\ V^{\tilde{\pi}}(\tilde{r}, \bm{s}_t, \bm{\phi}_{\bm{s}_t}, g) - V_{\max} \cdot \delta_g - \beta_r \cdot \sqrt{2Hd\log \left((t+H)/d \right)} \\
=&\ J_{\mathcal{X}}^*(\tilde{r}, \bm{s}_t, \bm{\phi}_{\bm{s}_t}, g) - V_{\max} \cdot \delta_g - \beta_r \cdot \sqrt{2Hd\log \left((t+H)/d \right)} \\
\ge&\ V^*(r, \bm{s}_t, \bm{\phi}_{\bm{s}_t}, g) - V_{\max} \cdot \delta_g - \beta_r \cdot \sqrt{2Hd\log \left( (t+H)/d \right)}
\end{alignat*}
In this derivation, the second line follows from the assumption of $\Omega = \mathcal{X}^-$, the third line follows from $P(A_{\mathcal{X}^-}) \le \delta_g$, the fourth line follows from (\ref{eq:bar_eta_R_EB}), the fifth line follows from the fact that $\tilde{\pi}$ is precisely the optimal policy for $\tilde{R}$ and $\bm{b}$, the sixth line follows from Lemma~\ref{lemma_J_ge_V}, and the final line follows from the definition of $\epsilon_V$.
\end{proof}

\section*{F. Additional Information on Experiments}

Note that the source code we used for our experiment is in the supplemental material.
Also, a video to illustrate how our \algo\ works is in the supplementary material.

\subsection*{F.1. Grid World}
The source code for this experiment is contained in ``grid\_world'' directory.
In this experiment, we used Intel(R) Xeon(R) Gold 6248 CPU with 512 GB RAM.
The number of iteration was $400$.
We used the following parameters as mentioned in the main paper, which are same throughout every method: $\gamma = 0.999$, $\delta_r = 0.05$, $\delta_g = 0.05$, $h = 0.1$, $d = 5$. For getting the results in Figure~\ref{fig:syn_result}, we ran the simulation 100 times and calculated the mean and standard error for every method.
When obtaining the Figure~\ref{fig:syn_result_3}, due to the computational cost, we reduced $\gamma$ and number of simulation to $0.98$ and $5$, respectively. 
Example screen shot of our simulation environment is shown in Figure~\ref{fig:ss_grid_world}.

\begin{figure}[h]
    \centering
    \includegraphics[width=100mm]{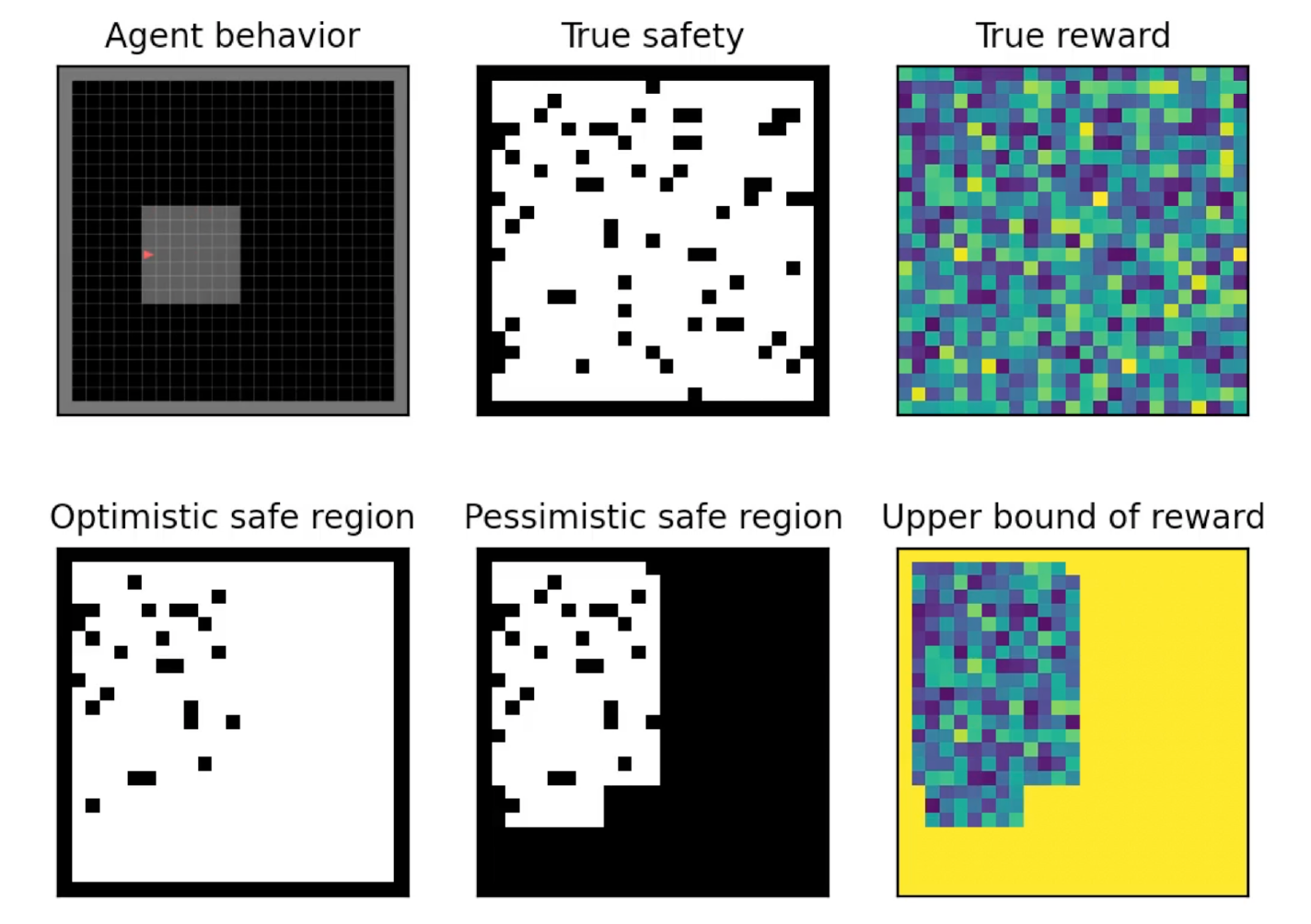}
    \caption{Example screenshot of our grid world experiment.}
    \label{fig:ss_grid_world}
\end{figure}

\subsection*{F.2. Safety-Gym}

We present the detailed information regarding our experiments using Safety-Gym \cite{Ray2019}.
In this experiment, we used Intel(R) Xeon(R) Platinum 8180 CPU with 1 TB RAM.
We conducted an experiment using Safety-Gym environment with a goal and multiple hazards.
The objective of the agent is to reach the goal without visiting hazards
using a LiDAR sensor with 16 bins around a full circle for sensing.

For fair comparison, we used as many same parameters as possible for every method.
The followings are the examples of the common parameters.
\begin{itemize}
  \setlength{\parskip}{0cm}
  \setlength{\itemsep}{1mm}
  \item Number of (different) environments for Monte-Carlo simulation: $200$
  \item Discount factor: $\gamma=0.99$
  \item Number of samples for initialization: 40
\end{itemize}
Reward and safety samples for initialization were created by letting agents observe ``test environment'' with only one goal and one hazard.
This (small number of) samples are particularly significant for our proposed method to initialize the GLM and enable the agent to guarantee safety.
Note that, for fair comparison, we also provided prior information for other methods.

\paragraph{Implementation of CPO, PPO-Lagrangian, and TRPO-Lagrangian.}

Our baseline implementations are heavily dependent on the \emph{safety-starter-agent} repository (\url{https://github.com/openai/safety-starter-agents.git}) as mentioned in the main paper.
We used the following parameters.
\begin{itemize}
  \setlength{\parskip}{0cm}
  \setlength{\itemsep}{1mm}
  \item Number of epochs: 600
  \item Max. length of episode: 1000
  \item Optimizer: Adam
  \item hidden size: (64, 64)
\end{itemize}

\paragraph{Implementation of \algo.}

Safety-Gym is a benchmark originally developed for deep RL algorithms. 
Unlike the three baselines, our algorithm is basically based on dynamic programming methods such as policy iteration to solve Bellman equations.
Hence, when we test our proposed method, we discretized the state space into $50 \times 50 = 2,500$ grids and the action space into five (i.e., go north, go east, go south, go west, and stay).
We converted discrete actions to a series of continuous actions to navigate the agent in Safety-Gym environment.

We constructed the feature vector using LiDAR observations for the goal and hazards.
Specifically, the feature vector concatenate such observations and bias constant of 1, which is represented as
\[
\phi = [\, o_\text{goal}, o_\text{hazard}, 1 \,],
\]
where $o_\text{goal}$ and $o_\text{hazard}$ are respectively the LiDAR observations for the closest goal and hazards.
We used the following parameters in our experiment.
\begin{itemize}
  \setlength{\parskip}{0cm}
  \setlength{\itemsep}{1mm}
  \item Algorithm to solve Bellman equation: Approximate Modified Policy Iteration  \cite{scherrer2012approximate}
  \item Link function: sigmoid function
  \item Safety threshold: $h = 0.7$
\end{itemize}

\end{document}